\newtheorem{example}{Example}
\newtheorem{theorem}{Theorem}
\newtheorem{definition}{Definition}
\newtheorem{proposition}{Proposition}
\newcommand{\ie}{{\it i.e.}}
\newcommand{\eg}{{\it e.g.}}
\newcommand{\HE}{\textit{HE}}
\newcommand{\LE}{\textit{LE}}
\newcommand{\ar}{\textit{at-robby}}
\newcommand{\f}{\textit{-from}}
\newcommand{\q}{\textit{-to}}
\newcommand{\pre}{\textit{pre}}
\newcommand{\eff}{\textit{eff}}
\newcommand{\add}{\textit{add}}
\newcommand{\del}{\textit{del}}
\newcommand{\at}{\textit{at}}
\newcommand{\white}{\textit{white}}
\newcommand{\black}{\textit{black}}
\newcommand{\free}{\textit{free}}
\newcommand{\carry}{\textit{carry}}
\newcommand{\move}{\textit{move}}
\newcommand{\charge}{\textit{charge}}
\newcommand{\pick}{\textit{pick}}
\newcommand{\drop}{\textit{drop}}
\title{An Automatic Sound and Complete Abstraction Method for Generalized Planning with Baggable Types}
\author{%
    Hao Dong,
    Zheyuan Shi,
    Hemeng Zeng,
    Yongmei Liu\thanks{Corresponding author}
}
\begin{document}

\maketitle

\begin{abstract}
Generalized planning is concerned with how to find a single plan to solve multiple similar planning instances. Abstractions are widely used for solving generalized planning, and QNP (qualitative numeric planning) is a popular abstract model. Recently, Cui et al. showed that a plan solves a sound and complete abstraction of a generalized planning problem if and only if the refined plan solves the original problem. However, existing work on automatic abstraction for generalized planning can hardly guarantee soundness let alone completeness. In this paper, we propose an automatic sound and complete abstraction method for generalized planning with baggable types. We use a variant of QNP, called bounded QNP (BQNP), where integer variables are increased or decreased by only one. Since BQNP is undecidable, we propose and implement a sound but incomplete solver for BQNP. We present an automatic method to abstract a BQNP problem from a classical planning instance with baggable types. The basic idea for abstraction is to introduce a counter for each bag of indistinguishable tuples of objects. We define a class of domains called proper baggable domains, and show that for such domains, the BQNP problem got by our automatic method is a sound and complete abstraction for a generalized planning problem whose instances share the same bags with the given instance but the sizes of the bags might be different. Thus, the refined plan of a solution to the BQNP problem is a solution to the generalized planning problem. Finally, we implement our abstraction method and experiments on a number of domains demonstrate the promise of our approach. 
\end{abstract}

\begin{links}
\link{Code}{https://github.com/sysulic/ABS}
\end{links}

\section{Introduction}

Generalized planning (g-planning in short), where a single plan works for multiple planning instances, remains a challenging problem in the AI community \cite{Levesque05,Srivastava08AbstractCounting,Hu11MultipleEnvironments,Aguas16GP,Bonet18Features,Illanes19QP,France21LearningFromExample}. Computing general solutions with correctness guarantees is a key problem in g-planning. 

Abstraction methods play an important role in solving g-planning problems. The idea is to abstract a given low-level (LL) problem to get a high-level (HL) problem, solve it and then map the solution back to the original problem. Based on the agent abstraction framework of \citeauthor{BanihashemiGL17} (\citeyear{BanihashemiGL17}), \citeauthor{cui} (\citeyear{cui}) proposed a uniform abstraction framework for g-planning. \citeauthor{cui2023} (\citeyear{cui2023}) proposed an automatic verification method for sound abstractions of g-planning problems.

Qualitative numeric planning (QNP) \cite{Srivastava11QNP}, an extension of classical planning with non-negative real variables that can be increased or decreased by some arbitrary amount, has been a popular abstract model for g-planning. 
A number of QNP solvers have been developed, including FONDASP \cite{FONDASP} and DSET \cite{hemeng22}.
\citeauthor{Bonet18Features} (\citeyear{Bonet18Features}) abstracted a class of g-planning problems into QNP problems. 
In recent years, the automatic generation of  abstractions for g-planning has attracted the attention of researchers. 
\citeauthor{B2019a} (\citeyear{B2019a}) learned a QNP abstraction of a g-planning problem from a sample set of instances, however, the abstraction is only guaranteed to be sound for sample instances. \citeauthor{B2019b} (\citeyear{B2019b}) showed how to obtain a first-order formula that defines a set of instances on which the abstraction is sound. \citeauthor{Illanes19QP} (\citeyear{Illanes19QP}) considered a class of g-planning problems called quantified planning problems based on the idea of quantifying over sets of similar objects, and adapted QNP techniques to produce general solutions. They also proposed to use the work by \citeauthor{Riddle16Baggy} (\citeyear{Riddle16Baggy}) to build a quantified planning problem out of a planning instance. However, they did not address the soundness and completeness issues of their abstraction method. A closely related line of work is  reformulation \cite{Riddle16Baggy,Compiling2016}, where to reduce the state space, a classical planning instance is reformulated by quantifying over indistinguishable objects. 

In this paper, we propose an automatic method to abstract a QNP problem from a  classical planning instance with baggable types.  We use a variant of QNP, called bounded QNP (BQNP), where integer variables are only increased or decreased by one.
The basic idea for abstraction is to introduce a counter for each bag of indistinguishable tuples of objects. 
The reason we use BQNP instead of QNP as our abstract model is that our target abstract actions are those which perform an action on arbitrary elements from bags, thus increasing or decreasing the size of bags by one. 
We resolve the technical complications involved with the definitions of numeric variables, abstract goal, and abstract actions. In particular, we have to ensure the numeric variables are independent from each other, since QNP cannot encode constraints among numeric variables. 
We define a class of domains called proper baggable domains, and 
show that for such domains, the BQNP problem is a sound and complete abstraction for a g-planning problem whose instances share the same bags  with the given instance but the sizes of the bags might be different. 
Since BQNP is undecidable, we propose a sound but incomplete algorithm to test if a BQNP policy terminates, and implement a basic BQNP solver based on the QNP solver DSET. 
Finally, we implement our abstraction method, and experiments on a number of  domains demonstrate its promise. 
To the best of our knowledge, this is the first automatic abstraction method which can guarantee both soundness and completeness.

\section{Preliminaries}

\subsection{Situation Calculus}

The situation calculus \cite{sc2001} is a many-sorted first-order language with some second-order ingredients suitable for describing dynamic worlds. There are three disjoint sorts: $action$ for actions, $situation$ for situations, and $object$ for everything else. The language also has the following components: a situation constant $S_0$ denoting the initial situation; a binary function $do(a, s)$ denoting the successor situation to $s$ resulting from performing action $a$; a binary relation $Poss(a, s)$ indicating that action $a$ is possible in situation $s$; a set of relational (functional) fluents, i.e., predicates (functions) taking a situation term as their last argument. We call a formula with all situation arguments eliminated a situation-suppressed formula $\phi$. We use $\phi[s]$ to denote the formula obtained from $\phi$ by restoring $s$ as the situation arguments to all fluents.

In the situation calculus, a particular domain of application can be specified by a basic action theory (BAT) of the form
$\mathcal{D}=\Sigma\cup\mathcal{D}_{ap}\cup\mathcal{D}_{ss}\cup\mathcal{D}_{una}\cup\mathcal{D}_{S_0}$,
where $\Sigma$ is the set of the foundational axioms for situations, $\mathcal{D}_{ap}$, $\mathcal{D}_{ss}$ and $\mathcal{D}_{una}$ are the sets of action precondition axioms, successor state axioms, unique name axioms for actions, and $\mathcal{D}_{S_0}$ is the initial knowledge base stating facts about $S_0$.

\citeauthor{golog} (\citeyear{golog}) introduced a high-level programming language Golog with the following syntax:
\begin{equation*}
\delta::=\alpha \mid \phi?\mid \delta_1;\delta_2\mid \delta_1|\delta_2 \mid \pi x.\delta \mid \delta^*,
\end{equation*}
where $\alpha$ is an action term; $\phi?$ is a test;  $\delta_1;\delta_2$ is sequential composition; $\delta_1|\delta_2$ is non-deterministic choice; $\pi x.\delta$ is non-deterministic choice of action parameter; and $\delta^*$ is  nondeterministic iteration. The semantics of Golog is defined using an abbreviation $Do(\delta, s, s')$, meaning that executing the program $\delta$ in situation $s$ will result in situation $s'$. 

The counting ability of first-order logic is very limited. \citeauthor{FOCN} (\citeyear{FOCN}) extended FOL by counting, getting a new logic FOCN. The key construct of FOCN are counting terms of the form $\#\overline{y}.\varphi$, meaning the number of tuples $\overline{y}$ satisfying formula $\varphi$.  The 
situation calculus has been extended with counting by, \eg, \citeauthor{Zarrie2016DecidableVO} (\citeyear{Zarrie2016DecidableVO}).  

\subsection{STRIPS}

\begin{definition} \rm A STRIPS domain is a tuple $D=\langle T, P, A\rangle$, where $T$ is a set of object types, $P$ is a set of predicates and $A$ is a set of actions, every $a\in A$ consists of preconditions $\pre(a)$, add list $\add(a)$ and delete list $\del(a)$, where $\pre(a)$ is a formula that must be satisfied before $a$ is executed, $\add(a)$ is a set of the true ground atoms after doing $a$, and $\del(a)$ is a set of the false ground atoms after performing $a$.
\end{definition}

\begin{definition} \rm
A STRIPS planning instance is a tuple $\mathcal{P}=\langle D, O, I, G \rangle$, where $D$ is a STRIPS domain, $O$ is a set of objects of different types, $I$, the initial state, is a set of ground atoms made from predicates in $P$ and objects in $O$, and $G$, the goal condition, is a set of ground atoms.
\end{definition}

Given a STRIPS domain, it is easy to write its  BAT $\mathcal{D}$. We omit the details here. 

\begin{example}[Gripper World] \label{g1} \rm 
The Gripper domain involves a robot with several grippers and a number of balls at different rooms. The robot robby can move between rooms and each gripper may carry one ball a time. The predicates are: $\at(b, r)$ denotes ball $b$ is at room $r$; $\white(b)$ means $b$ is white; $\black(b)$ means $b$ is black; $\carry(b, g)$ denotes gripper $g$ carries $b$; $\free(g)$ denotes $g$ is free; $\HE(g)$ denotes $g$ is high energy; $\LE(g)$ denotes $g$ is low energy; $\ar(r)$ denotes robby is at $r$. The actions are: $\move(r\f,r\q)$ denotes robby moves from one room to another room; $\charge(g)$ denotes charging $g$; $\drop(b,g,r)$ denotes $g$ drops $b$ at $r$;
$\pick(b,g,r)$ denotes $g$ picks $b$ at $r$, where
\begin{itemize}
    \item $\textit{pre}=\{\at(b, r), \free(g), \ar(r), \HE(g)\}$;
    \item $\textit{eff}=\{\carry(b,g), \LE(g), \neg \at(b,r), \neg \free(g), \neg \HE(g)\}$.
\end{itemize}

Below is a planning instance $\mathcal{P}=\langle D, O, I, G\rangle$, where 
\begin{itemize}
    \item $O=\{b_1, b_2, b_3, b_4, b_5, b_6, b_7, b_8, g_1, g_2, r_1, r_2\}$;
    \item $I=\{\at(b_1, r_1), \at(b_2, r_1), \at(b_3, r_1), \at(b_4, r_1), \\ \at(b_5, r_2), \at(b_6, r_2), \at(b_7, r_2), \at(b_8, r_2), \white(b_1), \\ \white(b_2), \white(b_3), \white(b_4), \black(b_5), \black(b_6), \\ \black(b_7), \black(b_8), \free(g_1), \free(g_2), \HE(g_1), \\ \HE(g_2), \ar(r_1)\}$;
    \item $G=\{\at(b_1, r_2), \at(b_2, r_2), \at(b_3, r_2), \at(b_4, r_2), \\ \at(b_5, r_1), \at(b_6, r_1), \at(b_7, r_1), \at(b_8, r_1)\}$.
\end{itemize}
\end{example}

\subsection{Qualitative Numeric Planning (QNP)}

QNP is classical planning extended with numerical variables that can be decreased or increased by arbitrary amount \cite{Srivastava11QNP}. Given a set of non-negative numerical variables $V_N$ and a set of propositional variables $V_B$, $\mathcal{L}$ denotes the class of all consistent sets of literals of the form $N>0$ and $N=0$ for $N\in V_N$, $B$ and $\neg B$ for $B\in V_B$.

\begin{definition} \rm
A QNP problem is a tuple $\mathcal{Q}=\langle V_N, V_B, Init, Goal, Ops \rangle$ where $V_N$ is a set of non-negative numeric variables, $V_B$ is a set of propositional variables, $Ops$ is a set of actions, every $op\in Ops$ has a set of preconditions $\textit{pre}(op)\in\mathcal{L}$, and effects $\textit{eff}(op)$, $Init\in\mathcal{L}$ is the initial state, $Goal\in\mathcal{L}$ is the goal condition.
Propositional effects of $\textit{eff}(op)$ contain literals of the form $B$ and $\neg B$ for $B\in V_B$. Numeric effects of $\textit{eff}(op)$ contain special atoms of the form $inc(N)$ or $dec(N)$ for $N\in V_N $ which increase or decrease $N$ by an arbitrary amount.
\end{definition}

A \textit{qualitative state (qstate)} of $\mathcal{Q}$ is an element of $\mathcal{L}$ in which each variable has a corresponding literal. A \textit{state} of $\mathcal{Q}$  is an assignment of non-negative values to all $N\in V_N$ and of truth values to $B\in V_B$. An instance  of $\mathcal{Q}$ is a numerical planning instance $Q= \langle V_N, V_B, s_0, Goal, Ops\rangle$ which replaces $Init$ with a state $s_0$ satisfying $Init$.

A policy $\pi$  for a QNP problem $\mathcal{Q}$ is a partial mapping from qstates into actions. Given a policy $\pi$, a $\pi$-trajectory is a sequence of states $s_0,s_1,\ldots $ (finite or infinite) s.t. for all $i\geq 0$, $s_{i+1}$ can be resulted from performing $\pi(\bar{s}_i)$ in $s_i$, where $\bar{s}_i$ is the qstate satisfied by $s_i$. 

We omit the definitions that $\pi$    terminates for $\mathcal{Q}$  and $\pi$   solves $\mathcal{Q}$. 
\citeauthor{Srivastava11QNP} (\citeyear{Srivastava11QNP}) introduced a sound and complete algorithm SIEVE, which tests whether a policy $\pi$ for $Q$ terminates. Given $G$, the qstate transition graph induced by $Q$ and $\pi$, SIEVE iteratively removes edges from $G$ until $G$ becomes acyclic or no more edges can be removed. Then $\pi$ terminates iff $G$ is acyclic.

\subsection{Abstraction for Generalized Planning}

\citeauthor{cui} (\citeyear{cui}) proposed a uniform abstraction framework for g-planning, which we adapt to our setting.

\begin{definition} \rm
A g-planning problem is a tuple $\mathcal{G}=\langle \mathcal{D}, G\rangle$, where $\mathcal{D}$ is a BAT and $G$ is a goal condition.
\end{definition}

A solution to a g-planning problem $\mathcal{G}=\langle \mathcal{D}, G\rangle$ is a Golog program $\delta$ s.t. for any model $M$ of $\mathcal{D}$, $\delta$ terminates and achieves the goal. We omit the formal definition here. 

\begin{definition}[refinement mapping]\rm
A function $m$ is a refinement mapping from the HL g-planning problem $\mathcal{G}_h = \langle \mathcal{D}_h, G_h \rangle$ to the LL g-planning problem $\mathcal{G}_l = \langle \mathcal{D}_l, G_l \rangle$ if for each HL action type $A$, $m(A(\vec{x})) = \delta_{A}(\vec{x})$, where $\delta_{A}(\vec{x})$ is a LL program; for each HL relational fluent $P$, $m(P(\vec{x})) = \phi_{P}(\vec{x})$, where $\phi_{P}(\vec{x})$ is a LL situation-suppressed formula; for each HL functional fluent $F$,  $m(F(\vec{x})) = \tau_{F}(\vec{x})$, where $\tau_{F}(\vec{x})$  is a LL term, possibly a counting term.
\end{definition}

For a HL formula $\phi$, $m(\phi)$ denotes the formula resulting from replacing each HL symbol in $\phi$ with its LL definitions. For a HL program $\delta$, $m(\delta)$ is similarly defined. 

\begin{definition}[$m$-isomorphism]\rm
Given a refinement mapping $m$, a situation $s_{h}$ of a HL model $M_{h}$ is $m$-isomorphic to a situation $s_{l}$ in a LL model $M_{l}$, written $s_{h}\sim_{m}s_{l}$, if:
for any HL relational fluent $P$, and variable assignment $v$, we have $M_{h}, v[s/s_{h}] \models P(\vec{x},s)$ iff $M_{l}, v[ s/s_{l}] \models m(P)(\vec{x}, s)$; for any HL functional fluent $f$, variable assignment $v$, we have $M_{h}, v[s/s_{h}] \models f(\vec{x},s)=y$ iff $M_{l}, v[s/s_{l}] \models m(f)(\vec{x},s)=y$.
\end{definition}

\begin{proposition}\label{p1}
Suppose $s_h\sim_m s_l$. Let $\phi$ be a HL situation-suppressed formula. Then $M_h$, $v[s/s_h]\models \phi[s]$ iff $M_l$, $v[s/s_l]\models m(\phi)[s]$. 
\end{proposition}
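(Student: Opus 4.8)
The plan is to prove the statement by structural induction on the situation-suppressed formula $\phi$, after first establishing an auxiliary claim about the denotations of terms. The fact that makes the induction go through is that the refinement mapping $m$ acts homomorphically on the logical structure of a formula---it only rewrites the atomic fluents and the function symbols occurring in terms---so that $m(\neg\psi)=\neg m(\psi)$, $m(\psi_1\wedge\psi_2)=m(\psi_1)\wedge m(\psi_2)$, $m(\exists x.\psi)=\exists x.m(\psi)$, and so on. Thus the only place where $s_h\sim_m s_l$ is actually used is at the atomic level, while the connectives and quantifiers are handled uniformly.

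Before the formula induction I would prove a term lemma: for every HL term $t$ (of object or number sort) whose free variables lie in the domain of $v$, the denotation of $t$ in $M_h$ under $v[s/s_h]$ equals the denotation of $m(t)$ in $M_l$ under $v[s/s_l]$. This is proved by induction on the structure of $t$. The base case covers variables and rigid object constants, which $m$ leaves unchanged and which both models interpret identically under the shared assignment $v$. For the inductive case of a functional fluent $f(t_1,\dots,t_k)$, I would first apply the induction hypothesis to each argument to obtain a common value $o_i := t_i^{M_h,v[s/s_h]} = m(t_i)^{M_l,v[s/s_l]}$, then instantiate the functional-fluent clause of $m$-isomorphism at the assignment sending $\vec{x}$ to $\vec{o}$: writing $o := f(\vec{o})^{M_h}$, the clause gives $m(f)(\vec{o})^{M_l} = o$, and since $m(f(\vec{t})) = \tau_f(m(\vec{t}))$ evaluates $\tau_f$ exactly at $\vec{o}$, the two denotations coincide. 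This is where counting terms are absorbed: the isomorphism condition equates the HL fluent value with the value of the LL counting term $\tau_f$ directly, so the induction never has to descend into $\tau_f$.

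With the term lemma in hand, the base case of the formula induction treats an atomic formula $P(\vec{t})$ and equality atoms $t_1=t_2$. Letting $\vec{o}$ be the common denotation of $\vec{t}$ and of $m(\vec{t})$ given by the term lemma, I would pass from $\vec{t}$ to fresh variables $\vec{x}$ assigned $\vec{o}$, apply the relational-fluent clause of $m$-isomorphism to obtain $M_h,v'[s/s_h]\models P(\vec{x},s)$ iff $M_l,v'[s/s_l]\models m(P)(\vec{x},s)$, and then pass back from $\vec{x}$ to $m(\vec{t})$, since both denote $\vec{o}$. Equality atoms follow immediately, as the term lemma makes corresponding sides denote the same element. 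The inductive cases for $\neg$, $\wedge$, $\vee$, and the quantifiers are then routine: each connective commutes with $m$, and for $\exists x$ (resp.\ $\forall x$) I apply the induction hypothesis to $\psi$ under the extended assignment $v[x/o]$ for each $o$, relying on the object sort being shared between $M_h$ and $M_l$ so that the quantifier ranges over the same domain in both models.

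I expect the main obstacle to be the term lemma rather than the formula induction, specifically the gap between the way $m$-isomorphism is stated---only for fluent atoms whose arguments are variables---and the arbitrary nested terms (including number-sorted counting subterms) that can occur as arguments in a real formula. Bridging this gap requires the reassignment argument above, instantiating the isomorphism clauses at the values named by subterms via a tailored variable assignment, and it also forces me to make explicit the background assumptions that object constants are rigid and interpreted identically and that the object and number sorts are common to $M_h$ and $M_l$; without the shared-domain assumption the quantifier step would not even type-check.
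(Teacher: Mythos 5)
Your proof is correct, but note that the paper itself offers no proof of this proposition at all: it is stated in the preliminaries as a result adapted from Cui et al.'s abstraction framework (which in turn builds on Banihashemi, De Giacomo and Lesp\'erance), so there is nothing in the paper to compare against line by line. Your argument is the canonical one for this kind of statement: structural induction on $\phi$, with all the genuine content concentrated in a term lemma handling functional fluents (and hence the counting terms $\tau_F$), plus the observation that $m$ commutes with connectives and quantifiers so that $s_h\sim_m s_l$ is only invoked at the atomic level. The reassignment device you use to bridge the gap between the isomorphism clauses (stated for variable arguments) and arbitrary nested terms is exactly what a careful write-up requires, and the two background assumptions you surface---rigid, identically interpreted object constants, and a shared object domain so that quantifiers range over the same elements in $M_h$ and $M_l$---are indeed implicit in the paper's framework (its definition of $m$-isomorphism already uses a single variable assignment $v$ for both models, which only makes sense with a common domain). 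Making those assumptions explicit is a service the paper does not perform; nothing in your proposal would fail under the framework as the paper intends it.
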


In the following definition, $\Delta^{M}_S$ denotes all situations of $M$, $S_0^{M}$ stands for the initial situation of $M$.

\begin{definition}[$m$-bisimulation] \rm
A relation $R\subseteq \Delta^{M_{h}}_{S} \times \Delta^{M_{l}}_{S}$ is an $m$-bisimulation relation, if $\langle S^{M_{h}}_{0}, S^{M_{l}}_{0} \rangle \in R$, and $\langle s_{h}, s_{l}\rangle \in R$ implies that: $s_{h}\sim_{m}s_{l}$; for any HL action type $A$, and variable assignment $v$, if there is a situation $s'_{l}$ s.t. $M_{l},v[s/s_{l},s'/s'_{l}]\models Do(m(A(\vec{x})),s,s')$, then there is a situation $s'_{h}$ s.t. $M_{h},v[s/s_{h},s'/s'_{h}]\models Do(A(\vec{x}),s,s')$ and $\langle s'_{h},s'_{l}\rangle \in R$, and vice versa.
\end{definition}

\begin{definition}\rm
$\mathcal{G}_h$ is a sound $m$-abstraction of $\mathcal{G}_l$ if for each model $M_l$ of $\mathcal{G}_l$, there is a model $M_h$ of $\mathcal{G}_h$ s.t. there is an $m$-bisimulation relation $R$ between $M_{h}$ and $M_{l}$, and for any $\langle s_h, s_l\rangle \in R$,  $M_h,v[s_h/s] \models G_h[s]$ iff $M_l,v[s_l/s] \models G_l[s]$.
\end{definition}

\begin{definition} \rm
$\mathcal{G}_h$ is a complete $m$-abstraction of $\mathcal{G}_l$ if for each model $M_h$ of $\mathcal{G}_h$, there is a model $M_l$ of $\mathcal{G}_l$ s.t. there is a $m$-simulation relation $R$ between $M_{h}$ and $M_{l}$, and for any $\langle s_h, s_l\rangle \in R$,  $M_h,v[s_h/s] \models G_h[s]$ iff $M_l,v[s_l/s] \models G_l[s]$.
\end{definition}

\begin{theorem} \label{thm1}
If $\mathcal{G}_h$ is a sound and complete $m$-abstraction of $\mathcal{G}_l$, then $\delta$ solves $\mathcal{G}_h$ iff $m(\delta)$ solves $\mathcal{G}_l$.
\end{theorem}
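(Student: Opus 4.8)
The plan is to reduce the entire statement to a single \emph{program correspondence lemma} and then read off both directions of the theorem from soundness and completeness respectively. The lemma I would establish is: if $R$ is an $m$-bisimulation between $M_h$ and $M_l$ and $\langle s_h, s_l\rangle \in R$, then for every subprogram $\delta$ of the given program we have $M_h, v[s/s_h, s'/s_h'] \models Do(\delta, s, s')$ for some $s_h'$ with $\langle s_h', s_l'\rangle \in R$ if and only if $M_l, v[s/s_l, s'/s_l'] \models Do(m(\delta), s, s')$ for the corresponding $s_l'$, and symmetrically. I would prove this by structural induction on $\delta$ following the Golog grammar $\alpha \mid \phi? \mid \delta_1;\delta_2 \mid \delta_1|\delta_2 \mid \pi x.\delta \mid \delta^*$. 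The primitive-action base case $\alpha = A(\vec{x})$ is exactly the back-and-forth clause in the definition of $m$-bisimulation. The test case $\phi?$ follows from Proposition~\ref{p1}: since $\langle s_h, s_l\rangle \in R$ gives $s_h \sim_m s_l$, we get $M_h \models \phi[s_h]$ iff $M_l \models m(\phi)[s_l]$, and $m$ maps $\phi?$ to $m(\phi)?$, which leaves the situation fixed. Sequential composition, nondeterministic choice, and $\pi x.\delta$ all follow by composing or unioning the inductive hypotheses, using that $m$ distributes over the program constructors.

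For the forward direction, assume $\delta$ solves $\mathcal{G}_h$ and take any model $M_l$ of $\mathcal{D}_l$. By soundness there is a model $M_h$ of $\mathcal{D}_h$ together with an $m$-bisimulation $R$ containing $\langle S_0^{M_h}, S_0^{M_l}\rangle$. To transfer \emph{termination}, I would argue contrapositively: any infinite $m(\delta)$-execution in $M_l$ lifts, step by step through the bisimulation, to an infinite $\delta$-execution in $M_h$, contradicting that $\delta$ terminates in every model of $\mathcal{D}_h$. To transfer \emph{goal achievement}, each final low-level situation $s_l'$ reached by $m(\delta)$ is matched by a final high-level situation $s_h'$ reached by $\delta$ with $\langle s_h', s_l'\rangle \in R$; since $\delta$ solves $\mathcal{G}_h$ we have $M_h \models G_h[s_h']$, and the goal-agreement clause of sound abstraction then yields $M_l \models G_l[s_l']$. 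Hence $m(\delta)$ solves $\mathcal{G}_l$.

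The backward direction is symmetric but uses only the one-directional relation supplied by completeness. Assuming $m(\delta)$ solves $\mathcal{G}_l$, take any $M_h \models \mathcal{D}_h$; completeness furnishes $M_l \models \mathcal{D}_l$ and an $m$-simulation $R$ in which every high-level transition from a related pair is matched by a low-level one. The simulation-direction instance of the correspondence lemma lets me lift any $\delta$-execution in $M_h$ to an $m(\delta)$-execution in $M_l$: an infinite high-level execution would give an infinite low-level one, contradicting termination of $m(\delta)$, so $\delta$ terminates in $M_h$; and every final $\delta$-situation $s_h'$ maps to a final $m(\delta)$-situation $s_l'$ with $M_l \models G_l[s_l']$, whence $M_h \models G_h[s_h']$ by goal agreement. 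Thus $\delta$ solves $\mathcal{G}_h$.

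I expect the main obstacle to be the iteration case $\delta^*$ of the correspondence lemma, since $Do(\delta^*, s, s')$ is given by a second-order transitive-closure formula rather than a first-order recursion; the inductive hypothesis must therefore be applied uniformly across all finite unfoldings, and I must show the (bi)simulation preserves the induced reachability relation at the fixed-point level. Closely coupled to this is the need to argue about \emph{termination}, not merely reachability of a goal situation: I must confirm that maximal executions correspond to maximal executions. The subtlety specific to the backward direction is that completeness gives only an $m$-simulation, so I must verify that this single direction still suffices to transport both termination and goal achievement from $M_l$ back to $M_h$, which amounts to checking that every maximal high-level execution is matched by a maximal low-level one.
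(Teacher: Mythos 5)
The paper never proves Theorem~\ref{thm1}; it is imported from Cui et al.'s abstraction framework, which the paper merely adapts, so your attempt can only be judged on its own terms. Your overall architecture is the standard and correct one for this framework: a correspondence lemma for $Do$ proved by structural induction on Golog programs (the action base case is the matching clause of $m$-bisimulation, the test case is Proposition~\ref{p1}, and $\delta^*$ is handled by induction over finite unfoldings), followed by the forward direction from soundness and the backward direction from completeness, with the goal-agreement clause transporting goal satisfaction across related situations. That pairing of soundness with refinement and completeness with lifting back is exactly right.

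The genuine gap is in your forward termination transfer. You claim that any infinite execution of $m(\delta)$ in $M_l$ lifts ``step by step through the bisimulation'' to an infinite execution of $\delta$ in $M_h$. But the bisimulation clause quantifies only over \emph{completed} executions of refinements: it constrains situations $s_l'$ with $M_l \models Do(m(A(\vec{x})), s_l, s_l')$ and says nothing about runs of $m(A(\vec{x}))$ that never complete. An infinite low-level execution need not cross infinitely many completed-refinement boundaries; it can diverge inside a single refinement program, and such a run has no high-level counterpart to contradict. Indeed, under your strong (``every execution terminates'') reading of \emph{solves}, the theorem is false for arbitrary refinement mappings: take $m(A) = a^*$ with $a$ always executable; $A$ terminates at the high level while $m(A)$ has an infinite run, yet the $Do$-level bisimulation conditions can all hold. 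So either \emph{solves} must be read in the weak, $Do$-based sense (for every model there exists a completed goal-reaching execution), under which termination is folded into $Do$ and your correspondence lemma alone suffices, or one must add the hypothesis that every refinement program terminates (true for this paper's refinements, which are test-then-single-action programs, but not part of the theorem's statement). Note also that under the weak reading the matching directions you need flip: the backward direction then requires lifting low-level executions to high-level ones, which is precisely the direction left over when ``and vice versa'' is dropped from the paper's bisimulation definition --- a more plausible reading of the undefined ``$m$-simulation'' than the high-to-low direction you assumed.
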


\section{Bounded QNP}

In this section, we consider a variant of QNP, called bounded QNP (BQNP), where numeric variables are only increased or decreased by one. Since BQNP is undecidable, we propose a sound but incomplete method to test whether a policy for a BQNP problem terminates, based on which, by adapting a characterization of QNP solutions to BQNP, we propose a sound but incomplete method for BQNP solving.  

\begin{definition} \rm
A BQNP problem is a QNP problem where numeric variables take integer values, $inc(N)$ is interpreted as: $N$ is increased by 1, and similarly for $dec(N)$.
\end{definition}

\begin{definition} \rm \label{sob}
Given a BQNP problem $\mathcal{B}$, a policy $\pi$ for $\mathcal{B}$ is a partial mapping from qualitative states to actions. A policy $\pi$ terminates for $\mathcal{B}$ (resp. solves $\mathcal{B}$) if for every instance of $\mathcal{B}$, the only $\pi$-trajectory  started from the initial state is finite (resp. goal-reaching). 
\end{definition}

As noted in \cite{Srivastava11QNP}, BQNP policies can be used to represent arbitrary abacus programs, so BQNP is undecidable. Formal proof is given in \citeauthor{De2002} (\citeyear{De2002}).

\begin{theorem}\label{undecidable}
The decision problem of solution existence for BQNP is undecidable: there is no algorithm to  decide whether a BQNP problem has a solution.
\end{theorem}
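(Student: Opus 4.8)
The plan is to establish undecidability by an effective reduction from the halting problem for counter machines (Minsky machines / abacus programs), which is a classical undecidable problem: a deterministic two-counter machine started with its counters at zero is Turing complete, so no algorithm decides whether such a machine halts. The crucial observation is that the defining restriction of BQNP---numeric variables may be incremented or decremented only by one---is exactly the arithmetic vocabulary of a counter machine, while the qualitative states of a BQNP problem, which record only whether each numeric variable satisfies $N>0$ or $N=0$, expose precisely the zero-test information on which a counter machine branches. I would therefore build, computably from any counter machine $M$, a BQNP problem $\mathcal{B}_M$ such that $M$ halts iff $\mathcal{B}_M$ has a solution.

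For the encoding I would introduce one non-negative numeric variable $N_c$ per counter $c$ of $M$, and a block of propositional variables encoding the program counter (the current line/control state), under the convention that exactly one of them is true. The initial condition $Init$ pins down every variable---each $N_c=0$ (or $M$'s initial counter values) and the program counter set to $M$'s start line---so that $\mathcal{B}_M$ has a single instance, and ``solving for every instance'' collapses to ``the unique $\pi$-trajectory from $s_0$ is goal-reaching''. Instructions translate as follows: an increment ``$c{+}{+}$; goto $j$'' becomes an action with numeric effect $inc(N_c)$ and a propositional effect advancing the program counter to line $j$; a test-and-decrement ``if $c=0$ goto $k$ else $c{-}{-}$; goto $j$'' becomes two actions sharing the current-line precondition, one with the extra precondition $N_c>0$, effect $dec(N_c)$ and program counter set to $j$, and one with the extra precondition $N_c=0$ and program counter set to $k$. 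The goal $Goal$ asserts that the program counter is at $M$'s halting line.

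Correctness follows from faithfulness of the simulation. Since every reachable qualitative state determines both the current line and the truth of each $N_c>0$, exactly one action is applicable in it, and applying it mirrors one step of $M$; hence the unique trajectory of $\mathcal{B}_M$ (under any policy defined on the reachable qstates) is the step-by-step run of $M$. If $M$ halts, the policy following $M$'s instructions produces a finite, goal-reaching trajectory, so $\mathcal{B}_M$ is solvable; conversely, if $M$ diverges, the unique trajectory is infinite and never sets the program counter to the halting line, so no policy solves $\mathcal{B}_M$. As $M\mapsto\mathcal{B}_M$ is effective, a decision procedure for BQNP solution existence would decide halting, a contradiction.

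The main obstacle is not the reduction idea but making the faithfulness argument airtight: I must verify that $N_c=0$ in the qstate coincides with counter $c$ being empty throughout the run---an invariant that relies exactly on the unit-increment/decrement property, which is why QNP's arbitrary decrements would not suffice---that the propositional program-counter gadget keeps exactly one line active after every action, and that the ``for every instance'' quantifier in the definition of solving is neutralized by choosing $Init$ to fix the initial state uniquely. Establishing these invariants, and confirming that the single applicable action in each reachable qstate leaves no room to deviate from the simulation, is where the care lies; the rest is the standard transfer of undecidability through an effective reduction, carried out in detail by \cite{De2002}.
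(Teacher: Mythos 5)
Your proposal is correct and is essentially the same argument the paper relies on: the paper proves this theorem by noting (following \citeauthor{Srivastava11QNP}) that BQNP policies can represent arbitrary abacus programs, deferring the formal counter-machine reduction to \citeauthor{De2002}, which is exactly the reduction you spell out. Your explicit construction (counters as unit-changing numeric variables, program counter as mutually exclusive propositions, zero-tests read off the qualitative state, and a point initial state neutralizing the ``for every instance'' quantifier) is a faithful expansion of that cited proof, so there is nothing to flag.
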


We now analyze the relationship between QNP and BQNP. The following results follow from the definitions:  

\begin{proposition}\label{simple}
Let $\mathcal{Q}$ be a QNP problem, and let $\mathcal{B}$ be its corresponding BQNP problem. Then 
\begin{enumerate}
\item If a policy $\pi$ terminates for $\mathcal{Q}$, then it terminates for $\mathcal{B}$.
\item If a policy $\pi$ solves $\mathcal{Q}$, then it solves $\mathcal{B}$.
\end{enumerate}
\end{proposition}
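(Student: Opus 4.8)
The plan is to prove both parts by establishing a tight correspondence between the $\pi$-trajectories of the QNP problem $\mathcal{Q}$ and those of its corresponding BQNP problem $\mathcal{B}$. The key observation is that $\mathcal{Q}$ and $\mathcal{B}$ share the same qualitative states, the same policy $\pi$, and the same action preconditions and propositional effects; they differ only in how numeric effects are realized. In $\mathcal{B}$, every $inc(N)$ adds exactly $1$ and every $dec(N)$ subtracts exactly $1$, whereas in $\mathcal{Q}$ these increments and decrements may be any positive real amount. Crucially, integer-valued BQNP states are a special case of QNP states (assigning non-negative reals), and a unit decrement/increment is a legal QNP move. So every $\pi$-trajectory of $\mathcal{B}$ is also a $\pi$-trajectory of $\mathcal{Q}$.

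First I would make this embedding precise. Fix an arbitrary instance of $\mathcal{B}$ with initial state $s_0$ assigning non-negative integers to the numeric variables. Since $Init$ is a set of literals of the form $N>0$, $N=0$ (and propositional literals), the same $s_0$ is a valid initial state for the corresponding instance of $\mathcal{Q}$. I would then show by induction on $i$ that the unique $\pi$-trajectory $s_0, s_1, \ldots$ of $\mathcal{B}$ is \emph{a} $\pi$-trajectory of $\mathcal{Q}$: at each step the qstate $\bar{s}_i$ is identical in both problems (since the literals $N>0$/$N=0$ and the truth values of the $B$ are determined identically), so $\pi$ selects the same action $op = \pi(\bar{s}_i)$; its precondition holds equally, its propositional effects coincide, and the BQNP successor obtained by applying the unit numeric changes is one admissible QNP successor of $op$ from $s_i$, because decreasing $N$ by $1$ from a positive integer yields a non-negative value and is a legal realization of $dec(N)$, and $inc(N)$ by $1$ is a legal realization of $inc(N)$.

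Given this embedding, both parts follow by contraposition. For part 1, suppose $\pi$ terminates for $\mathcal{Q}$, i.e., for every QNP instance every $\pi$-trajectory is finite. Take any BQNP instance; its unique $\pi$-trajectory is, by the embedding, a $\pi$-trajectory of the corresponding QNP instance, hence finite. Thus $\pi$ terminates for $\mathcal{B}$. For part 2, suppose $\pi$ solves $\mathcal{Q}$, so every $\pi$-trajectory for every QNP instance is finite and ends in a goal-satisfying state. For any BQNP instance, its unique $\pi$-trajectory is again a QNP $\pi$-trajectory, hence finite and goal-reaching; since $Goal$ is a set of literals of the same form and the final qstate agrees across both problems, the BQNP trajectory reaches the goal as well. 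Therefore $\pi$ solves $\mathcal{B}$.

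I do not expect a serious obstacle here, which matches the excerpt's remark that these results ``follow from the definitions.'' The one point that warrants care is the quantifier structure over instances and trajectories: in QNP a policy may induce \emph{many} trajectories from a given state (because the amounts are nondeterministic), whereas in BQNP there is a \emph{unique} trajectory from each initial state. The argument must therefore verify that the single BQNP trajectory is \emph{among} the QNP trajectories (an existential embedding), and then invoke the \emph{universal} QNP guarantee (all trajectories are finite/goal-reaching) to conclude the property for that one trajectory. This asymmetry is exactly what makes the implication go in the stated direction only; the converse need not hold, since QNP admits larger, non-unit jumps that BQNP cannot simulate.
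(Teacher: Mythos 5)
Your proof is correct and is essentially the argument the paper has in mind: the paper offers no explicit proof, stating only that the proposition ``follows from the definitions,'' and your embedding of the unique unit-step BQNP trajectory as one admissible QNP $\pi$-trajectory, followed by invoking the universal guarantee over all QNP trajectories, is exactly the natural formalization of that claim. Your closing remark on the quantifier asymmetry (existential embedding versus universal guarantee) also correctly explains why the converse fails, consistent with the paper's Example~\ref{loop}.
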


\citeauthor{hemeng22} (\citeyear{hemeng22}) gave a characterization of QNP solutions, which by a similar proof, holds for BQNP:

\begin{proposition}\label{char}
Given the AND/OR graph $G$ induced by a BQNP problem $\mathcal{B}$, a subgraph $G'$ of $G$, representing a policy for  $\mathcal{B}$, is a solution to $\mathcal{B}$ iff $G'$ is closed, terminating, and contains a goal node.
\end{proposition}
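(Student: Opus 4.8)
The plan is to mirror the structure of the QNP characterization proof of \citeauthor{hemeng22}, exploiting the fact that the AND/OR graph $G$, and the three conditions \emph{closed}, \emph{terminating}, and \emph{contains a goal node}, are all defined purely at the level of qualitative states and the qualitative transition relation of $\mathcal{B}$. Since a BQNP problem and its corresponding QNP problem induce the very same qstates and the same qualitative outcomes of $inc(N)$ and $dec(N)$ (only the magnitude of the numeric change differs, and these conditions never refer to magnitudes), the graph-theoretic part of the argument carries over essentially unchanged. I would therefore prove the two directions of the biconditional, isolating the single place where the integer-valued, increment-by-one semantics of BQNP must be verified.

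For the forward direction I assume the subgraph $G'$, which represents the policy $\pi$, solves $\mathcal{B}$, i.e. for every instance the unique $\pi$-trajectory from the initial state is finite and goal-reaching (Definition \ref{sob}). Finiteness for every instance is exactly termination, so $G'$ is terminating. A goal-reaching trajectory ends at a qstate satisfying $Goal$, whose node lies in $G'$, so $G'$ contains a goal node. For closedness I argue that every qstate reachable under $\pi$, together with the outcomes of its selected action, must appear in $G'$: if some outcome were missing, one could build an instance whose trajectory leaves $G'$ and on which $\pi$ is undefined or fails, contradicting that $\pi$ solves $\mathcal{B}$.

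For the backward direction I assume $G'$ is closed, terminating, and contains a goal node, and take an arbitrary instance $Q$ of $\mathcal{B}$ with its $\pi$-trajectory $s_0, s_1, \ldots$. By termination the trajectory is finite, ending at some $s_k$. Closedness ensures that at every non-goal qstate the policy action is defined and all of its qualitative outcomes stay in $G'$, so the trajectory cannot stop at a non-goal node; hence the terminal $s_k$ satisfies $Goal$ and the trajectory is goal-reaching. As this holds for every instance, $\pi$ solves $\mathcal{B}$.

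The main obstacle, and the only genuinely BQNP-specific step, is establishing the exact correspondence between concrete $\pi$-trajectories of integer-valued instances and paths in the qualitative graph $G$. I would show that each concrete transition via $inc(N)$ or $dec(N)$ projects onto the intended qualitative edge, and conversely that every qualitative branch of an AND node is \emph{realizable} by some BQNP instance---for example, a variable with $N > 0$ reaches $N = 0$ after finitely many unit decrements, while it can be kept positive by starting it large enough. Because the decrement is fixed at one rather than an arbitrary amount, this realizability must be checked explicitly rather than simply inherited from the QNP case; but since the resulting qualitative transition relation coincides with that of QNP, once realizability is confirmed the termination and closedness reasoning transfers verbatim, with Proposition \ref{simple} supplying the straightforward direction of the correspondence.
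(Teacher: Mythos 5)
Your backward (``if'') direction coincides with the paper's own proof: termination of $G'$ makes every concrete $\pi$-trajectory finite, and closedness rules out halting at a non-goal node, so every trajectory is goal-reaching. That half is sound, and note that it needs no realizability argument at all, since every concrete transition automatically projects onto one of the qualitative outcome edges recorded in $G$.

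The gap is in your forward (``only-if'') direction, precisely at the step you yourself flag as ``the only genuinely BQNP-specific step.'' Your closedness argument requires that every qualitative outcome of a selected action be \emph{realizable} by some instance, and you justify this by saying a variable with $N>0$ ``can be kept positive by starting it large enough.'' In BQNP this is false: the concrete value of a variable at a given qstate is not a free parameter; it is pinned down by the unit-step dynamics and the initial condition. For example, if $Init$ contains $N=0$ and the policy's first action has effect $inc(N)$, then $N$ equals exactly $1$ at the successor qstate in \emph{every} instance, so a subsequent $dec(N)$ can only ever realize its $N=0$ branch --- the $N>0$ branch of that AND node is unrealizable no matter how the instance is chosen. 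This can be parlayed into an outright counterexample to your argument: take $V_N=\{M,N\}$, $V_B=\{B\}$, $Init=\{M>0,N=0,\neg B\}$, $Goal=\{M=0\}$, with actions $a_1$ ($\pre$: $M>0,N=0,\neg B$; $\eff$: $inc(N),B$), $a_2$ ($\pre$: $N>0,B$; $\eff$: $dec(M),inc(N),\neg B$), $a_3$ ($\pre$: $M>0,N>0,\neg B$; $\eff$: $dec(N),B$). The policy applying $a_1,a_2,a_3$ at their respective qstates solves this BQNP problem in the sense of Definition~\ref{sob} (each round of $a_2$ decrements $M$ until $M=0$), yet at the qstate where $a_3$ is applied $N$ is always exactly $2$, so the $N=0$ outcome of $a_3$ --- a non-goal qstate where no action is executable --- is never realized; the induced subgraph contains this dead-end node and hence is not closed. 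So realizability is not something that merely ``must be checked explicitly'': it genuinely fails in BQNP, and your forward direction cannot be completed along these lines. The paper's own proof sidesteps this entirely by declaring the only-if direction ``obvious'' from Definition~\ref{sob} and proving only the if direction; your attempt to make the omitted half rigorous lands exactly on the spot where the claim is delicate, because for BQNP graph-theoretic closedness is strictly stronger than ``the policy is defined wherever some instance can actually go.''
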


\begin{proof} 
By Def. \ref{sob}, the only-if direction is obvious. For the if direction, assume that  there is an instance of $\mathcal{B}$ s.t. the only $G'$-trajectory  started from the initial state terminates at a non-goal node $s$. Since $G'$ is closed, $s$ will be continued with the execution of an action, which contradicts that the trajectory terminates at $s$. 
\end{proof}

However, for Proposition \ref{simple}, the converse of neither (1) nor (2) holds. In the following, we illustrate with an example. 

\begin{example}\rm \label{loop}
Let $\mathcal{Q}=\langle V_N, V_B, Init, Goal, Ops\rangle$, where $V_N=\{X\}$, $V_B=\{A, B\}$, $Init=\{X>0, A, \neg B\}$, $Goal=\{X=0\}$ and $Ops=\{a, b, c\}$, where $\pre(a)=\{X>0, A, B\}$, $\eff(a)=\{dec(X),\neg A\}$, $\pre(b)=\{X>0, \neg A, B\}$, $\eff(b)=\{dec(X),\neg B\}$, $\pre(c)=\{X>0, \neg A, \neg B\}$, $\eff(c)=\{inc(X),A,B\}$.

\begin{figure}
    \centering
    \includegraphics[width=0.4\textwidth]{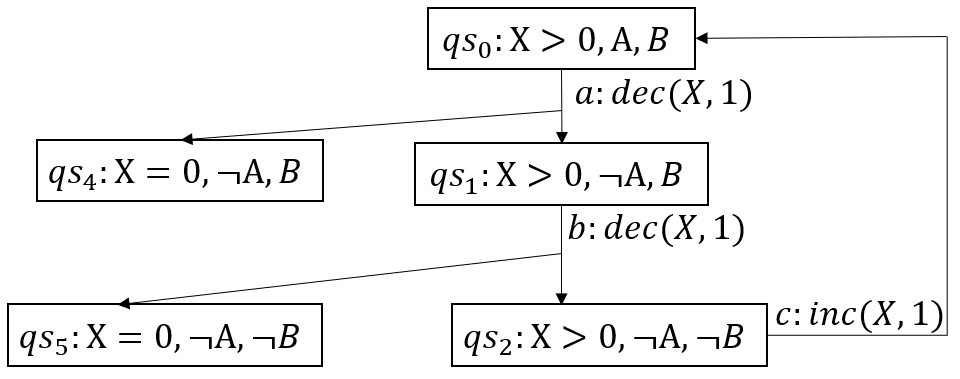}
    \caption{The solution graph of $\mathcal{B}$.}
    \label{fig1}
\end{figure}

Figure \ref{fig1} shows a policy $\pi$ for $\mathcal{Q}$ and the graph induced by $\pi$. By SIEVE, $\pi$ does not terminate for $\mathcal{Q}$, and hence not a solution for $\mathcal{Q}$. However, $\pi$ terminates for $\mathcal{B}$, since there is only one loop, and after each iteration of this loop, $X$ decreases by $1$. By Proposition \ref{char}, $\pi$ is a solution for $\mathcal{B}$. 
\end{example}

Since there are only finitely many policies, by Theorem \ref{undecidable} and Proposition \ref{char}, termination-testing for BQNP policies is undecidable. Motivated by Proposition \ref{simple} and Example \ref{loop}, 
based on SIEVE, we propose a sound but incomplete algorithm (Algorithm \ref{bterm}), to test whether a policy $\pi$ for a BQNP problem  $\mathcal{B}$ terminates.
In the algorithm, a SCC is a strongly connected component, and a simple loop is a loop where no node appears more than once. Given $G$, the qstate transition graph induced by $\pi$, our algorithm first applies SIEVE to $G$ and removes edges. It then returns ``Terminating" if every remaining SCC is a simple loop $g$ where there is a variable $v$ s.t. the number of actions in $g$ that decrease $v$ is more than the number of actions in $g$ that increase $v$. 

\begin{algorithm}
    \caption{Termination-Test}
    \label{bterm}
    \KwIn{$G$, the qstate transition graph induced by a policy $\pi$ for a BQNP problem $\mathcal{B}$}
    \KwOut{``Terminating" or ``Unknown"}
            \BlankLine
            \If{SIEVE($G$) = ``Terminating"}{
                \Return {``Terminating"}\;
            }
           $G' \gets G$ with some edges removed by \textit{SIEVE(}$G$\textit{)}\;
           Compute the SCCs of $G'$\;
            \ForEach{$g \in$ SCCs}{

                \If{$g$ is a simple loop }{
                    choose a variable $v$ s.t. the number of actions in $g$ that decrease $v$ is more than the number of actions in $g$ that increase $v$\;
                    \If{such $v$ exists}{
                        $g$ terminates\;
                    }
                }
             }
            \If{all $g \in$ SCCs terminate}{
              \Return {``Terminating"}\;
            }
            \Return {``Unknown"}\;
\end{algorithm}

\begin{theorem}
Given a BQNP problem $\mathcal{B}$ and a policy $\pi$, let $G$ be the qstate transition graph induced by $\pi$. If Termination-Test$(G)$ returns ``Terminating", then $\pi$ terminates.
\end{theorem}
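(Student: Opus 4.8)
The plan is to prove the claim directly: assuming Algorithm~\ref{bterm} outputs ``Terminating'', I would show that no instance of $\mathcal{B}$ admits an infinite $\pi$-trajectory. So suppose for contradiction that some instance has an infinite trajectory $s_0,s_1,\ldots$. Projecting each state to its qstate yields an infinite walk $\bar s_0,\bar s_1,\ldots$ in $G$. Since $G$ is finite, a standard argument shows that the set of edges traversed infinitely often forms a strongly connected subgraph $H$, and that after some finite index the walk never leaves $H$. The argument then reduces to locating $H$ and deriving a contradiction, and this is where the two output branches of the algorithm are treated separately.

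First I would dispatch the branch where $\mathrm{SIEVE}(G)$ already returns ``Terminating''. Here the qstate transition graph of $\mathcal{B}$ coincides with that of the corresponding QNP problem $\mathcal{Q}$, since both depend only on the qualitative signs of variables and on which variables each action increments or decrements. Hence by the soundness of SIEVE for QNP, $\pi$ terminates for $\mathcal{Q}$, and then Proposition~\ref{simple}(1) immediately gives that $\pi$ terminates for $\mathcal{B}$, contradicting the existence of the infinite trajectory.

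For the second branch I would invoke the key structural property behind SIEVE: every edge that SIEVE deletes decrements some variable that can never be replenished inside the cyclic part of the current graph, so along any trajectory each deleted edge is traversed only finitely often; and since a BQNP trajectory induces a walk in the same graph $G$ with still non-negative variables, this bound holds verbatim for $\mathcal{B}$. Consequently all edges of $H$ survive in $G'$, so $H$ lies inside a single SCC $g$ of $G'$. Because the algorithm returned ``Terminating'' on this branch, every SCC of $G'$, in particular $g$, is a simple loop carrying a variable $v$ with strictly more decrementing than incrementing actions; and since the only strongly connected subgraph of a simple cycle that contains an edge is the whole cycle, $H=g$. Thus the trajectory traverses the entire loop $g$ infinitely often, and because increments and decrements change $v$ by exactly one (this is precisely where BQNP, not QNP, is essential), each full pass nets a decrease of at least one in the integer value of $v$. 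As $v\ge 0$ throughout, only finitely many passes are possible, contradicting infinite traversal.

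I expect the main obstacle to be justifying, cleanly and for BQNP rather than QNP, that SIEVE-deleted edges are traversed only finitely often along any trajectory; this is the load-bearing import from the QNP setting, and I would need to verify that SIEVE's deletion criterion (a decremented-but-never-incremented variable within the current cyclic subgraph) yields the same finiteness bound when increments and decrements are by one and values are integers. The remaining graph-theoretic facts---that infinitely traversed edges form a strongly connected subgraph, and that a simple cycle has no proper strongly connected sub-subgraph containing an edge---are routine, and the net-decrease-on-a-loop argument is elementary given non-negativity.
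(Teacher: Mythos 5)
Your proof is correct and takes essentially the same route as the paper's: dispatch the SIEVE-acyclic branch via Proposition~\ref{simple}, show that any infinite execution must eventually be confined to $G'$ because SIEVE-removed edges are traversed only finitely often, and then rule out each surviving simple-loop SCC by the net unit decrease of $v$ per pass against non-negativity. The step you flag as the main obstacle is precisely what the paper compresses into the phrase ``by soundness of SIEVE, any potential infinite loop resides in $G'$,'' and it closes exactly as you sketch: induct on SIEVE's removal order, using the fact that the infinitely-traversed edges form a strongly connected subgraph and hence lie inside the SCC current at each removal, where the removed edge's variable is decremented by one infinitely often but incremented only finitely often, contradicting non-negativity.
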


\begin{proof} By Proposition 2, if SIEVE($G$) returns ``Terminating'',  $\pi$ terminates for $\mathcal{B}$. By soundness of SIEVE, any potential infinite loop resides in $G'$. If a SCC of $G'$ is decided ``terminates'', it cannot be executed infinitely often since the variable $v$ eventually reaches 0 no matter how the other variables behave. When all SCCs of $G'$ terminate, there cannot be any infinite loop in $G'$, and thus $\pi$ terminates. 
\end{proof}

Based on their characterization of QNP solutions, \citeauthor{hemeng22}  (\citeyear{hemeng22}) introduced an approach to solve a QNP by searching for a solution in the induced AND/OR graph, and implemented a QNP solver DSET. By Prop. \ref{char}, a sound but incomplete BQNP solver can be implemented by replacing the termination test in DSET with Alg. \ref{bterm}. 

\citeauthor{termination23} (\citeyear{termination23}) proposed a policy termination test algorithm for the QNP variant with deterministic semantics, where numeric variables are only increased or decreased by a fixed discrete quantity. The algorithm leverages classic results from graph theory involving directed elimination trees and their quotient graphs to compute all ``progress'' variables that change in only one direction (either increasing or decreasing), which are then used to identify all the edges that can be removed. In contrast, our termination test algorithm is specifically tailored for BQNP, is more intuitive and easier to implement.

\section{Our Abstraction Method}

In this section, we show how to abstract a given planning instance $\mathcal{P}$ of a baggable domain into a BQNP problem $\mathcal{B_P}$. The basic idea is to introduce a counter for each bag of indistinguishable tuples of objects.

\subsection{Baggable Domains and Bags}

If two objects can co-occur as the arguments of the same predicate or action, then they can be distinguished by the predicate or action. Thus we first define single types. A baggable type has to be a single type. 

\begin{definition} \rm For a domain $D=\langle T, P, A \rangle$, a type $t \in T$ is single if there is no predicate $p \in P$ or action schema $a \in A$ having more than one type $t$ argument. 
\end{definition}

\begin{definition} \rm Let $t$ be a single type, and $M$ a set of predicates involving $t$, called a predicate group for $t$. 
The mutex group formula of $M$ for $t$, denoted by $\phi_M^t$, is defined as: 
$\forall x. \Sigma_{p\in M} \#\vec{z}.p(x,\vec{z})=1,$
where $x$ is of type $t$. 
\end{definition}

Intuitively, $\phi_M^t$ means: for any object $e$ of type $t$, there is only one predicate $p\in M$ and only one $\vec{u}$ s.t. $p(e,\vec{u})$ holds. 

\begin{definition} \rm\label{siv} Let $D$ be a STRIPS domain. We use $\mathcal{D}$ for its BAT. Let $\Pi$ be a set consisting of a set $\Pi^t$ of predicate groups for each single type $t$. Let $T^S$ denote the set of all single types. We use $\phi_\Pi$ to denote $\bigwedge_{t\in T^S, M\in \Pi^t}\phi_M^t$, \ie, the conjunction of all mutex group formulas. 
We say $\Pi$ is a mutex invariant if $\mathcal{D}_{ap}\cup\mathcal{D}_{ss}\cup\mathcal{D}_{una}\models$ 
\[ \forall s \forall a. \phi_\Pi[s] \land Poss(a,s) \supset \phi_\Pi[do(a,s)].\]
\end{definition}

So $\Pi$ is a mutex invariant means: if $\phi_\Pi$ holds in a state, it continues to hold in any successor state resulting from an executable action. 
If $\Pi$ is a mutex invariant, we call each predicate group in $\Pi^t$ a mutex group for $t$. 

Note that in this paper, we ensure that a mutex group is a state constraint, \ie, holds in any reachable state, by ensuring 1) it holds in the initial states, as will be seen later in the paper; 2) the set of all mutex groups forms a state invariant, as required in the above definition. 

\begin{definition} \label{sw} \rm Let $D$ be a STRIPS domain. A baggable type is a single type $t$ s.t. predicates involving $t$ are partitioned into mutex groups. 
We say that $D$ is a baggable domain if there are baggable types. 
\end{definition}

So if a predicate $p$ involves two baggable types $t_1$ and $t_2$, $p$ must belong to both a mutex group of $t_1$ and a mutex group of $t_2$. Thus true atoms of $p$ induce a 1-1 correspondence between objects of $t_1$ and $t_2$. For Example 1, true atoms of $carry(b,g)$ induce a bijection between balls and grippers. This means each gripper can only carry one ball.

For Example \ref{g1}, types $ball$ and $gripper$ are baggable, but type $room$ is not. The mutex groups of $ball$ are: $M_1=\{\at(b, r), \carry(b, g)\}$ and $M_2=\{\white(b), \black(b)\}$. The mutex groups of $gripper$ are: $M_3=\{\free(g), \carry(b,g)\}$ and $M_4=\{\HE(g), \LE(g)\}$.

In the rest of the section, we assume a baggable domain $D=\langle T, P, A\rangle$ with mutex invariant $\Pi$ and we fix a planning instance $\mathcal{P}=\langle D, O, I, G \rangle$ s.t. $I$ satisfies $\phi_\Pi$. 

We now introduce some notation used throughout this paper. We use $T^B$ to denote the set of baggable types. 
Since a predicate does not contain different arguments of the same baggable type, we use 
$p(T,\vec{y})$ to represent a predicate, where $T\subseteq T^B$
denotes that there is an argument $t$ for each type $t\in T$, 
and $\vec{y}$ stands for arguments of non-baggable types. We also use $p(\vec{x},\vec{y})$ where $\vec{x}$ stands for all arguments of baggable types, and $p(x,\vec{z})$ where $x$ represents an argument of baggable types, and $\vec{z}$ denotes the remaining arguments. 
We use similar notation for action schemas. 
Finally, we use $e$ and $o$ for constants of baggable and non-baggable types, respectively, and $u$ for constants of either type.  

Next, we formalize the concept of bags. Informally, a bag is a set of indistinguishable objects. Essentially, two objects are indistinguishable in a state if they satisfy the same goals and predicates. Thus our formalization of a bag consists of two parts: a subtype of goal-equivalent objects and an extended AVS (attribute value vector). 

\begin{definition} \rm Given goal $G$, we say two objects $e_1$ and $e_2$ of the same baggable type are goal-equivalent if for all predicate $p$ and $\vec{u}$, $p(e_1,\vec{u})\in G$ iff $p(e_2,\vec{u})\in G$. We call each of the equivalence classes of $t$ a subtype of $t$.
\end{definition}

For Example \ref{g1}, the subtype of $gripper$: $st_1=\{g_1, g_2\}$. For $ball$: $st_2=\{b_1,b_2,b_3,b_4\}$ and $st_3=\{b_5,b_6,b_7,b_8\}$.

We now use mutex groups to define attributes of objects. We first explain the intuitive idea. The basic way to define attributes of objects of a type $t$ is to use each predicate involving $t$ as an attribute, and {\em true} and {\em false} as attribute values. However, this can be improved for baggable types. Note that for a baggable type $t$, predicates involving $t$ are partitioned into mutex groups, and for any object of type $t$, at any reachable state, one and only one predicate from the group holds. Thus we can use each mutex group as an attribute, and elements of the group as attribute values.

\begin{definition} \rm Let $t$ be a baggable type. We call an $M\in \Pi^t$ an attribute of objects of type $t$.  Let $p(T,\vec{y})\in M$ where $t\in T$. Let $\vec{o}$ be an instantiation of $\vec{y}$. We call $av(T)=p(T,\vec{o})$ an attribute value for $M$, where $av(T)$ denotes that $T$ is the set of variables for $av$. We use $D_M$ to denote the set of attribute values for $M$.
\end{definition}

For Example \ref{g1}, $D_{M_1}=\{\at(b, r_1), \at(b, r_2), \carry(b, g)\}$.

\begin{definition} \rm Let $t\in T^B, \Pi^t=\{M_1,\ldots,M_m\}$ and $avs(T)=(av_1(T_1),\ldots,av_m(T_m)) \in \times_{i=1}^m D_{M_i}$ where $T=\bigcup_i^m T_i$. We call $avs(T)$ an attribute value vector (AVS) for $t$. $Avs^t$ denotes the set of all attribute value vectors for $t$.
\end{definition}

For Example \ref{g1}, for type $ball$, $Avs^b=\{\at(b, r_1)\wedge \white(b), \at(b, r_1)\wedge \black(b), \at(b, r_2)\wedge \white(b), \at(b, r_2)\wedge \black(b), \carry(b, g) \wedge \white(b), \carry(b, g)\wedge \black(b)\}$. For type $gripper$, $Avs^g=\{\free(g)\wedge \HE(g), \free(g)\wedge \LE(g), \carry(b, g)\wedge \HE(g), \carry(b, g)\wedge \LE(g)\}$.

Our initial idea is to introduce a counter for each AVS. Then for Example \ref{g1}, we have the following counters: 
\begin{itemize}
\item $N_1=\#b.\exists g. \carry(b,g) \wedge \white(b)$, 
\item $N_2=\#b.\exists g. \carry(b,g) \wedge \black(b)$, 
\item $N_3=\#g.\exists b. \carry(b,g) \wedge \HE(g)$,
\item $N_4=\#g.\exists b. \carry(b,g) \wedge \LE(g)$. 
\end{itemize}
Since each gripper only carry one ball a time, there would be a constraint $N_1+N_2=N_3+N_4$. However, QNP cannot encode such numeric constraints. To resolve this issue, we define the concept of extended AVSes, and introduce a counter for each extended AVS. Thus instead, we have the following 4 counters, which are independent from each other: 
\begin{itemize}
\item $N_5=\#(b,g).\carry(b,g) \wedge \white(b)\wedge \HE(g)$,
\item $N_6=\#(b,g).\carry(b,g) \wedge \white(b)\wedge \LE(g)$,
\item $N_7=\#(b,g).\carry(b,g) \wedge \black(b)\wedge \HE(g)$,
\item $N_8=\#(b,g).\carry(b,g) \wedge \black(b)\wedge \LE(g)$.
\end{itemize}

The intuitive idea for defining extended AVSes is this. In the above example, true atoms of $carry(b,g)$ induce a 1-1 correspondence between objects of $ball$ and $gripper$. So we should join the AVS $\carry(b,g) \wedge \white(b)$ with one of the AVSes $\carry(b,g) \wedge \HE(g)$ and 
$\carry(b,g) \wedge \LE(g)$, getting $cavs$, and count the pairs $(b,g)$. It might be the case that a gripper is connected to an object of another type $t$ by a binary predicate $p(g,t)$. So we have to further join $cavs$ with an AVS of type $t$, and count the triples  $(b,g,t)$. We continue this process until no further join is possible.

\begin{definition} \rm Let $T$ be a set of baggable types. For each $t\in T$, let $avs^t(T_t) \in Avs^t$. Let $cavs=\bigwedge_{t\in T} avs^t(T_t)$. We call $cavs$ a conjunctive AVS. 
The underlying graph for $cavs$ is a graph whose set of nodes is $T$ and there is an edge between $t_1$ and $t_2$ if $avs^{t_1}\cap avs^{t_2}\neq \emptyset$. We call $cavs$ connected if its underlying graph is connected. 
We call $cavs$ an extended AVS if it is a maximal connected conjunctive AVS.  
We denote the set of extended AVSes with $Eavs$. For a type $t$, we use $Eavs^t$ to denote the set of extended AVSes that extends an AVS for $t$.
\end{definition}

For Example \ref{g1}, let $avs^b=\carry(b, g) \wedge \white(b)$, $avs^g=\carry(b, g)\wedge \HE(g)$. There is $eavs(b,g)=\carry(b, g) \wedge \white(b)\wedge \HE(g)$ and it is maximal. 

We can now finalize our formalization of a bag.

For $T\subseteq T^B$, we use $sts(T)$ to represent a subtype assignment, which maps each $t\in T$ to a subtype of $t$. We also use $sts(\vec{x})$ for $sts(T)$, meaning $x_i$ is of subtype $st_i$. 

A bag is a set of tuples of objects of $T$ satisfying both a subtype assignment $sts(T)$ and an extended AVS $eavs(T)$. 

\subsection{Abstraction Method}

First, a numeric variable counts the size of a bag.

\begin{definition} 
[Numeric variables] \rm  \label{nud} \hspace*{1cm}\\
$
V_N=\{ (sts(T), eavs(T)) \mid sts(T)$ is a subtype assignment,  $eavs(T)\in Eavs\}.
$\\
The refinement mapping is defined as follows: 

$m(sts(T), eavs(T))=\#\vec{x}. \bigwedge_i st_i(x_i) \land eavs(\vec{x})$.
\end{definition}
Recall $\#\vec{x}.\varphi$ means the number of tuples $\vec{x}$ satisfying $\varphi$. 

For Example \ref{g1}, here are some numerical variables that will be used later: $N_1=(st_3(b), \at(b, r_1)\wedge \white(b))$, $N_2=(st_3(b), \at(b, r_1)\wedge \black(b))$, $N_3=(st_3(b), \at(b, r_2)\wedge \white(b))$, $N_{4}=(st_3(b), \at(b, r_2)\wedge \black(b))$, $N_{5}=(st_1(g)\wedge st_3(b), \carry(b, g)\wedge \white(b)\wedge \HE(g))$, $N_{6}=(st_1(g)\wedge st_3(b), \carry(b, g)\wedge \black(b)\wedge \HE(g))$, $N_{7}=(st_1(g)\wedge st_3(b), \carry(b, g)\wedge \white(b)\wedge \LE(g))$, $N_{8}=(st_1(g)\wedge st_3(b), \carry(b, g)\wedge \black(b)\wedge \LE(g))$,
$N_9=(st_1(g), \free(g)\wedge \HE(g))$, $N_{10}=(st_1(g), \free(g)\wedge \LE(g))$.

\begin{definition} 
[Propositional variables] \rm \hspace*{0.5cm}\\
\(V_B=\{ p(\vec{o})\mid p\in P^N, \vec{o} \in O\}, \hspace*{1cm} m(p(\vec{o}))=p(\vec{o})\),  where $P^N$ is the set of predicates s.t. all arguments are of non-baggable types.
\end{definition}

For Example \ref{g1}, $B=\ar(r_1)\in V_B$.

The abstract initial state is simply the quantitative evaluation of the LL initial state. 
\begin{definition}\rm
Abstract initial state $I_h$: 
\begin{itemize}
    \item Propositional variables: For $B=p(\vec{o})\in V_B$, if $p(\vec{o})\in I$, then $B\in I_h$, otherwise  $\neg B\in I_h$;
    \item Numeric variables: For $N=(sts(T),eavs(T))\in V_N$, if $I\models m(N)>0$, then $N>0\in I_h$, else $N=0\in I_h$. 
\end{itemize}
\end{definition}

For Example \ref{g1}, $B=\ar(r_1), N_4>0\in I_h$. 

We now define the abstract goal $G_h$, which is characterized by those numeric variables that are equal to $0$. This is because the goal condition is a partial state that cannot definitively determine which numeric variables are greater than $0$. Thus, we introduce the following two sets: one is the set of all numeric variables involved in $G$, and the other is the set of numeric variables that may be greater than $0$ in $G$.

For a subtype $st$ of baggable type $t$, we define the set of numeric variables associated with $st$ as $V_{st}=\{(sts(T),eavs(T)) \mid st(t)\in sts(T), eavs(T) \in Eavs^t\}$. Let $g=p(\vec{e},\vec{o})\in G$ for predicate $p(T,\vec{y})$, $t\in T$, and $st$ is a subtype of $t$ containing $e_t$, which is the argument of type $t$ from $\vec{e}$. We define a subset of $V_{st}$ as follows: $V_{st}^{g}=\{(sts(T'),eavs(T')) \mid p(T,\vec{o}) \in eavs(T')\}$. 

\begin{definition} \rm
Abstract goal $G_h$: 
Propositional variables: For $B=p(\vec{o})\in V_B$, if $p(\vec{o}) \in G$, then $B\in G_h$;
Numeric variables: For $g=p(\vec{e},\vec{o})\in G$ and $N\in V_{st}\setminus V_{st}^{g}$, we have $N=0\in G_h$. 
\end{definition}  

For Example \ref{g1}, $N_4=0\in G_h$.

We now define abstract actions. Let $a(T,\vec{y})$ be an LL action, and let $N_i(T_i)$, $i=1, \ldots,k$ be numeric variables. We say that these numeric variables are suitable for $a(T,\vec{y})$ if $T_1, \ldots, T_k$ form a partition of $T$. 

\begin{definition}\rm
Let $N_i=(sts_i(T_i),eavs_i(T_i))$, $i=1, \ldots,k$ be suitable for $a(T,\vec{y})\in A$. Let $\vec{o}$ be an instantiation of $\vec{y}$. Let $a_o=a(T,\vec{o})$. If $\bigwedge_i eavs_i(T_i)\models \textit{pre}(a_o)-\{B\in \textit{pre}(a_o)\}$, there is a HL action, denoted by $\alpha=a(\vec{N},\vec{o})$ s.t.
\begin{itemize}
\item $\textit{pre}(\alpha)=\{B\in \textit{pre}(a_o)\} \cup \{N_i>0 \mid N_i\in \vec{N}\}$;
\item $\textit{eff}(\alpha)$ consist of: 1. $l\in \textit{eff}(a_o)$, where $l$ is a propositional literal; 2. for any $i$, if $eavs_i(T_i) \land \textit{eff}(a_o)\land \phi_\Pi$ is inconsistent, then $dec(N_i,1)$; 3. for any $N'=(sts(T'), eavs(T'))\in V_N-\{\vec{N}\}$ s.t. $sts(T')\subseteq \bigwedge_i sts_i(T_i)$ and $\bigwedge_i(eavs_i(T_i)- del(a_o))\wedge add(a_o))\models eavs(T')$, then $inc(N',1)$. 
\item $m(\alpha)=\pi T. \bigwedge_ists_i(T_i)\wedge eavs_i(T_i)\wedge \textit{pre}(a_0)?;a_o$.
\end{itemize} 
\end{definition}

For Example \ref{g1}, $N_1, N_9, N_{10}$ are suitable for $\pick(b,g,r)$.%
There exists a HL action $\alpha=\pick(N_1,N_9,N_{10},r_1)$, with 
$\textit{pre}(\alpha)=\{\ar(r_1), N_1>0, N_9>0\}$, and 
$\textit{eff}(\alpha)=\{dec(N_1,1), dec(N_9,1), inc(N_{10},1)\}$.

Finally, we give a simple example to demonstrate the form of the solutions to the BQNP problems as abstracted by our abstraction method. Suppose there are 3 rooms and some balls, all initially in room S, with some needed to be moved to room A and others to room B. The only available LL  action is ``push(ball, from, to)", which moves a ball directly from one room to another. After abstraction, we obtain a BQNP problem with 6 numeric variables in the form of $N_C^T$, representing the number of balls currently in room C that are intended for room T. A (compact) solution to this BQNP problem is: [{$N_S^A>0$}: push($N_S^A$, S, A), {$N_S^B>0$}: push($N_S^B$, S, B)], which is refined to a LL solution, meaning that when there are balls in room S intended for A (resp. B), we select any such ball and perform ``push(ball, S, A)" (resp. ``push(ball, S, B)". This solution can be used to solve any LL problem where all balls start in room S and need to be moved to either room A or B.

\section{Soundness and Completeness}

In this section, we define a class of baggable domains called proper baggable domains, and prove that our abstraction method is sound and complete for such domains. In particular, 
given an instance $\mathcal{P} $ of a proper baggable domain, 
we define the low-level generalized planning problem $\mathcal{G_P}$, and show that $\mathcal{B_P}$ derived from our abstraction method is a sound and complete abstraction of $\mathcal{G_P}$.

We begin with some propositions which serve to prove the correctness of the abstract goal (Proposition \ref{Giff}). 

For $t\in T^B$, we define $GM^t=\{sts(T\setminus\{t\})\land eavs(T) \mid eavs(T) \in Eavs^t\}$.

\begin{proposition}\label{gmg} $GM^t$ forms a general mutex group, \ie, \\
\(
\phi_\Pi \models \forall x. \Sigma_{\psi\in GM^t} \#\vec{z}.\psi(x,\vec{z})=1.\)
\end{proposition}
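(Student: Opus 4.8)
The plan is to fix an arbitrary model of $\phi_\Pi$ (equivalently, a state in which every mutex group formula $\phi_M^t$ holds) and an arbitrary object $e$ of type $t$, and to show that among all $\psi\in GM^t$ there is exactly one $\psi$, together with exactly one witness tuple $\vec{z}$, such that $\psi(e,\vec{z})$ holds. Summing the counts $\#\vec{z}.\psi(e,\vec{z})$ over $\psi\in GM^t$ then yields $1$, which is the desired entailment. So the whole argument reduces to an existence-and-uniqueness statement about the ``configuration'' of $e$, and I would prove it as (A) existence of a unique firing $(sts,eavs)$ with a unique witness tuple, (B) that this single $eavs$ has count exactly $1$ at $e$, and (C) that every other $\psi\in GM^t$ has count $0$ at $e$.

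First I would treat the unextended case, i.e.\ the base of the induction. By $\phi_M^t$ for each $M\in\Pi^t$, the object $e$ satisfies, in each mutex group, exactly one predicate at exactly one instantiation of its remaining arguments. Splitting those remaining arguments into non-baggable ones (which fix an attribute value in $D_M$) and baggable ones (the \emph{witnesses}), this says $e$ determines a unique attribute value per group, hence a unique $avs^t\in Avs^t$, together with a unique tuple of witnesses. Summing $\#\vec{z}.avs(e,\vec{z})$ over $avs\in Avs^t$ gives $1$, using the identity that summing a count over all instantiations of the non-baggable arguments equals counting over all of them, plus the fact that the mutex group formulas make the choices across distinct groups independent.

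Next I would handle the extension to $Eavs^t$. The key fact is that a predicate $p$ occurring in a mutex group of two baggable types belongs to a mutex group of each, so its true atoms induce a $1$--$1$ correspondence between those two types. Hence, starting from $e$ and its AVS, each shared attribute value points to a uniquely determined witness of another type, which in turn has (by the base step applied to that type) a unique AVS consistent with the shared atom. Following shared predicates transitively determines the entire connected component of $e$ and, by the maximality clause in the definition of extended AVS, a unique $eavs(T)\in Eavs^t$ together with a unique witness tuple $\vec{z}_0$; the bijectivity of each shared predicate is exactly what forces $\#\vec{z}.eavs(e,\vec{z})=1$ for this $eavs$. For exclusivity, any two distinct extended AVSes differ in some attribute value of some object in the component, and that value is pinned down by the mutex group formula of the corresponding type, so at most one $eavs$ can fire; in particular, the attribute value of $e$ already forced in some group of $\Pi^t$ dictates over which set of types the applicable extended AVS ranges, ruling out double counting across components of different shapes. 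Finally, refining by subtypes is harmless: the witnesses $\vec{z}_0$ lie in unique subtypes, so among all $\psi=sts(T\setminus\{t\})\land eavs(T)$ sharing the matching $eavs$, exactly one subtype assignment $sts$ agrees with $\vec{z}_0$ and contributes $1$, the rest contributing $0$.

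The main obstacle I anticipate is this extension step: making precise that the connected component of $e$, together with the attribute values of all its members, is uniquely determined, and that distinct $\psi\in GM^t$ are mutually exclusive on $e$. I would organize it as an induction on the number of types appearing in the extended AVS (equivalently, the size of $e$'s connected component), with the base case being the unextended AVS computation above and the inductive step adjoining one further witness through a shared, bijective predicate, relying throughout on the $1$--$1$ correspondence of doubly-mutex predicates and on the maximality in the definition of $Eavs^t$ so that each component is captured by exactly one extended AVS.
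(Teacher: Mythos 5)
Your proposal takes a genuinely different route from the paper, though the decompositions run in parallel. The paper's proof is algebraic and compositional: it defines two operations on mutex groups, \emph{refining} (replacing an element $p$ by $\{p\wedge q \mid q\in M'\}$) and \emph{joining} (refining every element), asserts that both preserve the general-mutex-group property, and then exhibits $GM^t$ as the result of a short chain of such operations --- $Avs^t$ is the joining of the mutex groups for $t$; $Eavs^t$ is obtained from $Avs^t$ by refining; the subtypes of a type form a mutex group, so the subtype assignments $STS(T\setminus\{t\})$ form a general mutex group; and $GM^t$ is the joining of $STS(T\setminus\{t\})$ with $Eavs^t$. Your proof is the pointwise semantic counterpart: fix a model of $\phi_\Pi$ and an object $e$, then establish existence and uniqueness of the firing $\psi$ and its witness tuple by induction on the number of types in the extended AVS. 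Your base case is the paper's same-type joining, your inductive step is the paper's refining, and your subtype step is the paper's final joining. What your version buys is that the $1$--$1$ correspondences induced by shared predicates, and the role of maximality, are made explicit rather than buried in the paper's ``it is easy to see''; what the paper's version buys is brevity and reusable closure lemmas.

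One caveat, which applies equally to your proof and to the paper's: the extension step is only valid when each new type is attached to the already-determined part through a \emph{single} shared atom, i.e., when the extended AVS is built up tree-like. Your inductive step ``adjoining one further witness through a shared, bijective predicate'' assumes exactly this, and the paper's refining operation assumes it too. If the structure of an extended AVS contains a cycle --- say three baggable types pairwise linked by $p(t_1,t_2)$, $q(t_1,t_3)$, $r(t_2,t_3)$, each a singleton mutex group for both of its types --- then the witnesses determined along different paths need not agree: the $p$-partner $e_2$ of $e_1$ may be $r$-matched to an object different from the $q$-partner $e_3$ of $e_1$, in which case no instantiation of $p\wedge q\wedge r$ fires at $e_1$ and the sum in the proposition is $0$, not $1$; a similar failure occurs when two AVSes share two atoms from different groups. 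Your appeal to transitivity and maximality does not repair this, because the failure is one of inconsistent witnesses, not of non-maximality. So your argument, like the paper's, is sound exactly for tree-structured extended AVSes (which covers all the domains the paper considers, where an extended AVS involves at most two types); if you want your induction to be airtight, state this acyclicity assumption explicitly as the invariant that lets each new type be adjoined through one shared atom.
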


\begin{proof}
First, it is easy to see that the set of subtypes of a type $t$ forms a mutex group. Now we define two notions concerning mutex groups. Let $M$ and $M'$ be two mutex groups, either for the same type or for different types. Let $p\in M$, and $M''$ be the set resulting from replacing $p\in M$ by elements from $p\wedge M'=\{p\wedge q \mid q \in M'\}$. It is easy to see that $M''$ is a general mutex group, and we say that $M''$ is obtained from $M$ by refining $p$ with $M'$. Let $M^*$ be obtained from $M$ by refining each of its elements with $M'$. So $M^*$ is also a general mutex group, we call it the joining of $M$ and $M'$.

Since $Avs^t$ is obtained by joining mutex groups for $t$, it is a general mutex group. Similarly, the set of subtype assignments for $T\setminus\{t\}$, denoted $STS(T\setminus\{t\})$, is also a general mutex group. Further, $Eavs^t$ is obtained from $Avs^t$ by refining some of its elements, thus it is also a general mutex group. Finally, by joining $STS(T\setminus\{t\})$ with $Eavs^t$, we get $GM^t$, which is a general mutex group. 
\end{proof}

We use $|st|$ to denote the size of $st$. By proposition \ref{gmg}, for any object $e$ of $st$, there is one and only one $(sts(T),eavs(T))\in V_{st}$ and only one $\vec{u}$ s.t. $sts(e,\vec{u})\land eavs(e,\vec{u})$ holds. Thus we have

\begin{proposition}\label{sum} 
$\phi_\Pi \models |st|=\Sigma_{N\in V_{st}}m(N)$.
\end{proposition}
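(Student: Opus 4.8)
The plan is to derive the identity from Proposition \ref{gmg} by a double-counting argument that partitions each count $m(N)$ according to the object of type $t$ over which it ranges. First I would set up a bijection between $V_{st}$ and $GM^t$. An element $N=(sts(T),eavs(T))\in V_{st}$ has the subtype of $t$ fixed to $st$, while its remaining data---a subtype assignment for $T\setminus\{t\}$ together with an extended AVS $eavs(T)\in Eavs^t$---is exactly the data defining an element of $GM^t$. Writing $\psi_N=sts(T\setminus\{t\})\land eavs(T)$ for the corresponding element of $GM^t$, the conjunct $st(t)$ that is dropped when passing from $N$ to $\psi_N$ is precisely the restriction that the $t$-argument lie in $st$.

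Next I would split each counting term by its $t$-component. Writing $\vec{x}=(x_t,\vec{z})$, where $x_t$ is the argument of type $t$ and $\vec{z}$ the remaining baggable arguments, the definition of $m$ gives
\[
m(N)=\#\vec{x}.\Big(\bigwedge_i st_i(x_i)\land eavs(\vec{x})\Big)=\Sigma_{x_t\in st}\#\vec{z}.\,\psi_N(x_t,\vec{z}),
\]
since the only constraint placed on $x_t$ is $st(t)$. Summing over $N\in V_{st}$, exchanging the two finite summations, and using the bijection $N\mapsto\psi_N$ in the inner sum, I would obtain
\[
\Sigma_{N\in V_{st}}m(N)=\Sigma_{x_t\in st}\Sigma_{\psi\in GM^t}\#\vec{z}.\,\psi(x_t,\vec{z}).
\]
Finally I would invoke Proposition \ref{gmg}: under $\phi_\Pi$, for every object $x_t$ of type $t$ the inner sum $\Sigma_{\psi\in GM^t}\#\vec{z}.\,\psi(x_t,\vec{z})$ equals $1$, so the whole expression collapses to $\Sigma_{x_t\in st}1=|st|$.

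The step I expect to be the main obstacle is making the bijection between $V_{st}$ and $GM^t$ fully precise and verifying that the split of the counting term is legitimate---in particular that $\psi_N(x_t,\vec{z})$, obtained by deleting the $st(t)$ conjunct from the body of $m(N)$, coincides exactly with the element of $GM^t$ appearing in Proposition \ref{gmg}, so that $\phi_\Pi$ is precisely the hypothesis needed to apply it. The non-baggable arguments, which are instantiated to fixed constants inside each $eavs$ rather than being counted, also need a brief check so that the variables $\vec{z}$ counted here match the $\vec{z}$ of Proposition \ref{gmg}. Once this alignment is confirmed, the remainder is a routine rearrangement of finite sums, matching the per-object uniqueness statement already recorded just before the proposition.
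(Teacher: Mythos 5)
Your proof is correct and takes essentially the same approach as the paper: the paper's one-line argument likewise applies Proposition \ref{gmg} to each object $e$ of $st$ to conclude that $e$ is counted exactly once, by exactly one $N\in V_{st}$ with exactly one witness tuple. Your version merely makes explicit the bijection between $V_{st}$ and $GM^t$, the splitting of each counting term by its $t$-component, and the exchange of finite sums, all of which the paper leaves implicit.
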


\begin{proof}
By Proposition \ref{gmg}, for each object $e$ of $st$, there is one and only one $N=(sts(T),eavs(T))\in V_{st}$ and only one $\vec{u}$  s.t. $sts(e,\vec{u})\land eavs(e,\vec{u})$ holds. 
\end{proof}

For Example \ref{g1}, $\Sigma_{i=1}^8 N_i=|st_3|=4$.

\begin{proposition}\label{gsum}
\(\phi_\Pi\cup G \models |st|=\Sigma\{m(N)\mid N\in V_{st}^{g}\}.\)
\end{proposition}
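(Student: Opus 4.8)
The plan is to obtain Proposition \ref{gsum} as a refinement of Proposition \ref{sum}. Proposition \ref{sum} already gives $\phi_\Pi \models |st| = \Sigma_{N \in V_{st}} m(N)$, and its proof exhibits, via the general mutex group $GM^t$ of Proposition \ref{gmg}, a bijection between the objects of $st$ and the witnessing tuples counted across all $N \in V_{st}$: each object $e$ of $st$ participates in exactly one tuple, for exactly one $N = (sts(T'), eavs(T')) \in V_{st}$. Since $V_{st}^{g} \subseteq V_{st}$, it therefore suffices to show that, under $\phi_\Pi \cup G$, every object of $st$ is in fact counted by some $N \in V_{st}^{g}$; equivalently, that $\Sigma_{N \in V_{st} \setminus V_{st}^{g}} m(N) = 0$. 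Restricting the bijection to $V_{st}^{g}$ then yields $|st| = \Sigma\{m(N) \mid N \in V_{st}^{g}\}$.

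First I would push the goal atom down to the whole subtype. Write $g = p(\vec{e}, \vec{o})$ with $e_t$ the type-$t$ argument, so $st$ is the subtype containing $e_t$. Because every $e' \in st$ is goal-equivalent to $e_t$, the definition of goal-equivalence (applied with all remaining arguments of $g$ held fixed) gives $p(e', \vec{z}_0) \in G$, where $\vec{z}_0$ collects the arguments of $g$ other than $e_t$; hence $p(e', \vec{z}_0)$ holds in every model of $G$. This atom is an instance of the attribute-value pattern $p(T, \vec{o})$, since its non-baggable arguments are exactly $\vec{o}$.

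Next I would connect this to the extended AVS witnessing $e'$. Since $t \in T$ and $p$ involves $t$, the predicate $p$ lies in some mutex group $M$ of $t$; under $\phi_\Pi$ this group is a state constraint, so exactly one attribute value from $M$ holds of $e'$. As $p(e', \vec{z}_0)$ holds, that attribute value must be $p(T, \vec{o})$, which is therefore a conjunct of the unique $eavs(T')$ witnessing $e'$ under the $GM^t$ bijection. By the definition of $V_{st}^{g}$, the corresponding $N$ lies in $V_{st}^{g}$. Thus every object of $st$ is counted by exactly one $N \in V_{st}^{g}$, and summing $m(N)$ over $V_{st}^{g}$ counts each object of $st$ precisely once, giving the claim.

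The main obstacle is bookkeeping rather than conceptual: one must handle predicates with several baggable arguments (so $T$ may contain types besides $t$, and the extended AVS may range over a still larger set $T' \supseteq T$) and verify precisely that the fixed non-baggable part $\vec{o}$ of the attribute value $p(T, \vec{o})$ matches the true atom $p(e', \vec{z}_0)$ while its baggable slots remain the variable pattern. Care is also needed to confirm $V_{st}^{g} \subseteq V_{st}$ and that no tuple counted by $V_{st}^{g}$ has a type-$t$ component outside $st$, so that restricting the Proposition \ref{sum} bijection introduces no spurious terms.
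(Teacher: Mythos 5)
Your proof is correct and takes essentially the same route as the paper's: use goal-equivalence to propagate the goal atom $p(\vec{e},\vec{o})$ to every object $e'$ of $st$, then conclude that the sum in Proposition \ref{sum} can be restricted to $V_{st}^{g}$. The paper's own proof is just a terser version of yours---it leaves implicit the mutex-group bookkeeping you spell out (that the holding atom forces $p(T,\vec{o})$ to be a conjunct of the unique extended AVS witnessing $e'$), so your additional detail fills in rather than changes the argument.
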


\begin{proof}
For any $e'\in st$, let $g'$ be the ground atom derived from $p(\vec{e},\vec{o})$ by replacing $e_t$ with $e'$, since $e'$ is goal-equivalent to $e_t$, we know $g'\in G$. Now given $G$, we know $g'$ holds. So we can replace $V_{st}$ with $V_{st}^{g}$ in Proposition \ref{sum}.
\end{proof}

\begin{proposition} \label{Giff}
Let $s_h \sim_m s_l$ and $s_l \models \phi_\Pi$. Then $s_h \models G_h$ iff $s_l\models G$.
\end{proposition}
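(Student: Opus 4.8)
The plan is to prove both directions at once by pushing $s_h \models G_h$ through the $m$-isomorphism to a set of numeric equalities on $s_l$, and then matching these against the definition of $G$. Since $G_h$ is a conjunction of propositional literals $B = p(\vec{o})$ and numeric literals $N = 0$, and since the $m$-isomorphism (applied via Proposition \ref{p1}) gives $s_h \models B$ iff $s_l \models p(\vec{o})$ and $s_h \models (N=0)$ iff $s_l \models m(N) = 0$, the statement reduces to showing that the low-level reading of $G_h$ is equivalent to $s_l \models G$, under the standing assumption $s_l \models \phi_\Pi$. I would first dispose of the propositional part: because $m(p(\vec{o})) = p(\vec{o})$ for $p \in P^N$, the propositional conjuncts of $G_h$ are in bijection with the non-baggable atoms of $G$, and the equivalence for these is immediate from the $m$-isomorphism in both directions.

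The core is the numeric part, where I would lean on Propositions \ref{sum} and \ref{gsum} together with the general-mutex-group structure of Proposition \ref{gmg}. For the forward direction, assuming $s_l \models G$, I fix a goal atom $g = p(\vec{e}, \vec{o})$ with associated subtype $st$, apply Proposition \ref{gsum} to obtain $|st| = \Sigma_{N \in V_{st}^g} m(N)$ and Proposition \ref{sum} to obtain $|st| = \Sigma_{N \in V_{st}} m(N)$; subtracting and using the non-negativity of the counters forces $m(N) = 0$ for every $N \in V_{st} \setminus V_{st}^g$, which by the $m$-isomorphism is exactly $s_h \models (N = 0)$, hence $s_h \models G_h$.

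For the backward direction I run this in reverse: from $s_h \models G_h$ the $m$-isomorphism yields $m(N) = 0$ for all $N \in V_{st} \setminus V_{st}^g$, so Proposition \ref{sum} collapses to $|st| = \Sigma_{N \in V_{st}^g} m(N)$. The remaining task is to read this equality combinatorially: by the bijection underlying Proposition \ref{gmg}, each object of $st$ is counted by exactly one $N \in V_{st}$, and it lies in $V_{st}^g$ precisely when its unique extended AVS contains the atom $p(T, \vec{o})$, \ie{} when the corresponding instance of $p(\cdot, \vec{o})$ holds in $s_l$. Equality with $|st|$ then forces every object of $st$, in particular the component $e_t$ of $g$, to satisfy that atom, and I would conclude $s_l \models g$.

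I expect the backward numeric direction to be the main obstacle, specifically the step that recovers the exact ground atom $g = p(\vec{e}, \vec{o})$ from the aggregate count. When $p$ has a single baggable argument this is clean, since the count directly pins down that every object of $st$ satisfies $p(\cdot, \vec{o})$. When $p$ joins two or more baggable types, the counter only records that each object of $st$ is $p$-related to some tuple over the remaining baggable types, not to the specific partners named in $\vec{e}$; here I would use goal-equivalence to argue that a goal atom specifying a particular pairing splits the participating objects into distinct (essentially singleton) subtypes, and then invoke the $1$-$1$ correspondence induced by the true atoms of $p$ between the baggable types to force the counted partner to coincide with the one named in the goal. Making this pairing argument airtight is the delicate point, and it is where the restriction to proper baggable domains is likely to be needed.
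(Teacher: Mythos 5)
Your proposal follows essentially the same route as the paper's proof: both directions are pushed through the $m$-isomorphism (Proposition \ref{p1}), the propositional conjuncts are dispatched via $m(p(\vec{o}))=p(\vec{o})$, and the numeric conjuncts are handled by playing Proposition \ref{sum} against Proposition \ref{gsum}, with the partition structure of Proposition \ref{gmg} doing the real work. If anything, your ``backward'' direction is stated more carefully than the paper's: the paper merely asserts that some $N'\in V_{st}^{g}$ has $m(N')>0$ and immediately concludes $p(\vec{u})\in s_l$, whereas you correctly invoke the stronger consequence that \emph{every} object of $st$, in particular $e_t$ itself, must be counted by a variable in $V_{st}^{g}$; that is the inference that actually closes the argument when $p$ has a single baggable argument.

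The one place you go beyond the paper is the ``delicate point'' you flag for goal atoms joining two baggable types, and there your sketched repair does not work --- but neither does the paper address this case; its proof is simply silent on it. Concretely, in Gripper take $G=\{\carry(b_1,g_1)\}$, so that $b_1$ and $g_1$ lie in singleton subtypes. The numeric part of $G_h$ then only requires that every variable counting $b_1$ at a room equals $0$ and every variable counting $g_1$ as free equals $0$: the definition of $V_{st}^{g}$ constrains only which atom occurs in the extended AVS, not the subtype assignments of the \emph{other} baggable types. Hence the state $s_l$ containing $\carry(b_1,g_2)$ and $\carry(b_2,g_1)$ satisfies $\phi_\Pi$ and all of $G_h$, yet $s_l\not\models G$. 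The $1$-$1$ correspondence induced by $\carry$ is a bijection between carried balls and non-free grippers, but nothing forces it to pair $b_1$ with $g_1$; goal-equivalence cannot force it, and properness constrains action schemas, not goals, so it cannot rescue the step either. So the statement is provable --- by your argument or by the paper's --- only under the implicit assumption that each goal atom mentions at most one baggable object, which all of the paper's examples satisfy; your write-up at least makes visible a gap that the paper's own proof papers over.
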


\begin{proof}
($\Rightarrow$) We prove that for all $p(\vec{u})\in G$, $p(\vec{u}) \in s_l$. If $p(\vec{u})\in V_B$, then it is in $G_h$. Since $ s_h\models G_h$, $p(\vec{u})\in s_h$. Since $ s_h \sim_m s_l$, $p(\vec{u})\in s_l$. Now let $g=p(\vec{u})\not\in V_B$. Then, for all $N\in V_{st}\setminus V_{st}^{g}$, we have $N=0$ in $G_h$. By $s_h \models G_h$, $N=0$ in $s_h$. Since $s_h \sim_m s_l$, $s_l\models m(N)=0$. Since $s_l \models \phi_\Pi$, by Propositions \ref{sum} and \ref{gsum}, there is $N'\in V_{st}^{g}$ s.t. $s_l\models m(N')>0$. So $p(\vec{u}) \in s_l$.

($\Leftarrow$) We prove that for all $B\in G_h$, $B\in s_h$ and for all $N=0\in G_h$, $N=0\in s_h$. For any $B\in G_h$, let $m(B)=p(\vec{o})$, then $p(\vec{o})\in G$. Since $s_l\models G$, $p(\vec{o})\in s_l$. Since $s_h\sim_m s_l$, $B\in s_h$. For any $N=0$ in $G_h$, there is $g=p(\vec{e},\vec{o})\in G$ and $N\in V_{st}\setminus V_{st}^{g}$. Since $s_l \models \phi_\Pi \cup G$, by Propositions \ref{sum} and \ref{gsum}, $s_l \models m(N)=0$. By $s_h\sim_m s_l$, $N=0\in s_h$.
\end{proof}

We now define a property of action schemas which ensures that doing any LL ground action changes the value of any numeric variable by at most one.  This will enable us to prove the correctness of abstract actions. 

To see the intuitive idea behind our definition, consider the extended AVS $\carry(b, g)\wedge \white(b)\wedge \HE(g)$. Suppose an action $a(b,g)$ changes the truth values of both $\white(b)$ and  $\HE(g)$. Then we require that it should also change the truth value of $\carry(b, g)$, which connects $b$ and $g$. So the action is atomic in the sense that it cannot be decomposed into an action $a_1(b)$ and an action $a_2(g)$.

\begin{definition}  \label{atomic} \rm  We say that an action schema $a$ is atomic, if
for any $eavs(T)\in Eavs$,  if $a$ changes the value of $p_1(T_1)$ and $p_2(T_2)$ from $eavs(T)$ where $T_1\cap T_2 = \emptyset$, then it also changes the value of some $p_3(T_3)$ from $eavs(T)$ where $T_1\cap T_3 \neq \emptyset $ and $T_2\cap T_3 \neq \emptyset $.
\end{definition}

We now give simple sufficient conditions for atomic actions. If an action just involves one baggable type, then it is atomic. Now consider an action $a(t_1,t_2)$ involving only two baggable types. It is atomic if for any  $p_1(t_1)$ and $p_2(t_2)$ s.t. $a$ changes the values of both predicates, $a$ also changes the value of $p(t_1,t_2)$, for any $p(t_1,t_2)$ s.t. it belongs to the same mutex group as neither $p_1(t_1)$ nor $p_2(t_2)$.

\begin{definition} \rm  
We say that a baggable domain is proper if each action schema is atomic. 
\end{definition}

For Example 1, the only two actions with two baggable types $pick(b,g,r)$ and $drop(b,g,r)$ change the truth value of $carry(b,g)$, which is the only predicate with two baggable types. Thus the domain is proper.

\begin{proposition}\label{changeone} 
Let $N=(sts(\vec{x}), eavs(\vec{x}))\in V_N$. For any atomic action $a$, if $a$ changes the value of $eavs(\vec{e})$, then for any $\vec{e}\thinspace'\neq \vec{e}$, $a$ does not change the value of $eavs(\vec{e}\thinspace')$. 
\end{proposition}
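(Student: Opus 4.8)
The plan is to fix an arbitrary executable ground action $a(\vec{c})$ and argue by contradiction, assuming that $a$ changes the value of both $eavs(\vec{e})$ and $eavs(\vec{e}\thinspace')$ for some $\vec{e}\neq\vec{e}\thinspace'$. The first step is to record what ``$a$ changes the value of $eavs(\vec{e})$'' means concretely: since $eavs(\vec{e})$ is a conjunction of ground atoms, its truth value flips only if $a$ flips one of its conjuncts $p(\vec{e}|_{T_p},\vec{o})$, which requires $\vec{e}|_{T_p}=\vec{c}|_{T_p}$ and that the schema $a$ changes $p$. The whole argument then rests on an auxiliary observation about how two distinct tuples can simultaneously satisfy one extended AVS. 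I would establish: if a reachable state $\sigma$ (so $\sigma\models\phi_\Pi$) satisfies both $eavs(\vec{e})$ and $eavs(\vec{e}\thinspace')$, then $\vec{e}$ and $\vec{e}\thinspace'$ disagree in \emph{every} coordinate. The proof is by propagation along the underlying graph of $eavs$: let $A=\{t : e_t=e'_t\}$; if $A$ were neither empty nor all of $T$, connectedness of $eavs$ gives an edge from $A$ to its complement, i.e.\ a connecting conjunct $q$ linking some $t_a\in A$ to some $t_b\notin A$, so that $q(e_{t_a},e_{t_b})$ and $q(e_{t_a},e'_{t_b})$ both hold in $\sigma$; the mutex group of $q$ for $t_a$ forces a unique witness for the object $e_{t_a}$, whence $e_{t_b}=e'_{t_b}$, contradicting $t_b\notin A$. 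Hence $A$ is empty or all of $T$, and since $\vec{e}\neq\vec{e}\thinspace'$ it is empty.

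With this in hand, the main argument runs as follows. By the first step there is a conjunct $p_1(T_1)$ of $eavs$ changed by $a$ with $\vec{e}|_{T_1}=\vec{c}|_{T_1}$, and likewise $p_2(T_2)$ with $\vec{e}\thinspace'|_{T_2}=\vec{c}|_{T_2}$. I would work in the state $\sigma$ in which both $eavs(\vec{e})$ and $eavs(\vec{e}\thinspace')$ hold (the pre-state for a deletion, the post-state for an addition; both are reachable and hence satisfy $\phi_\Pi$), so the auxiliary observation says $\vec{e}$ and $\vec{e}\thinspace'$ agree nowhere. If $T_1\cap T_2\neq\emptyset$, picking $t$ in the intersection gives $e_t=c_t=e'_t$, an immediate contradiction. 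The substantive case is $T_1\cap T_2=\emptyset$, and this is exactly where atomicity (Definition \ref{atomic}) enters: since $p_1$ and $p_2$ are two conjuncts of $eavs$ changed by $a$ over disjoint type sets, atomicity supplies a third changed conjunct $p_3(T_3)$ with $T_3\cap T_1\neq\emptyset$ and $T_3\cap T_2\neq\emptyset$.

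Turning $p_3$ into a contradiction is the step I expect to be the main obstacle. The idea is that $p_3$ is a connecting conjunct, hence governed by a mutex group on each of its types. Fixing $t_1\in T_3\cap T_1$ gives $e_{t_1}=c_{t_1}$; both $p_3(\vec{e}|_{T_3})$ (from $eavs(\vec{e})$) and the changed atom $p_3(\vec{c}|_{T_3})$ hold in $\sigma$ and share the $t_1$-argument $c_{t_1}$, so the mutex group of $p_3$ for $t_1$ forces the remaining arguments to coincide, i.e.\ $\vec{e}|_{T_3}=\vec{c}|_{T_3}$. Taking $t_2\in T_3\cap T_2$ then yields $e_{t_2}=c_{t_2}=e'_{t_2}$, contradicting that $\vec{e}$ and $\vec{e}\thinspace'$ agree nowhere. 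The delicate points to get right are that the changed atom $p_3(\vec{c}|_{T_3})$ genuinely holds in the common state $\sigma$ (true in the pre-state for a deletion, and symmetrically in the post-state for an addition), and that the mutex uniqueness argument is anchored on the object $c_{t_1}$ shared by both atoms; once these are handled, both remaining coordinates are pinned and the contradiction closes the proof.
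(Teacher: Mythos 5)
Your argument has a genuine gap: it silently assumes that $eavs(\vec{e})$ and $eavs(\vec{e}\thinspace')$ change in the \emph{same} direction. Everything in your main argument is anchored on a single state $\sigma$ in which both $eavs(\vec{e})$ and $eavs(\vec{e}\thinspace')$ hold, obtained as ``the pre-state for a deletion, the post-state for an addition.'' But nothing rules out, at that point of the proof, that $a$ deletes $eavs(\vec{e})$ while adding $eavs(\vec{e}\thinspace')$: then $eavs(\vec{e})$ holds only in the pre-state and $eavs(\vec{e}\thinspace')$ only in the post-state, so no common state $\sigma$ exists, the ``agree nowhere'' observation cannot be invoked, and both branches of your case analysis collapse (in particular $e_t=c_t=e'_t$ is no longer an ``immediate contradiction''). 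This mixed case is not vacuous --- ruling it out is part of what the proposition asserts --- and the paper devotes a separate argument to it (its ``case 2''): having obtained, via atomicity, changed conjuncts $p_1(T_1)$, $p_2(T_2)$ with a shared type $t\in T_1\cap T_2$, it assumes $\vec{e}_{T_1}\neq\vec{e}\thinspace'_{T_1}$, uses the mutex group of $p_1$ anchored at $e_t=e'_t$ to conclude that $p_1(\vec{e}\thinspace'_{T_1})$ is false in the pre-state, then uses the frame property (only atoms whose arguments all come from $a$'s arguments can change) to conclude it stays false, contradicting that $eavs(\vec{e}\thinspace')$ becomes true; and if instead $\vec{e}_{T_1}=\vec{e}\thinspace'_{T_1}$, then $p_1(\vec{e}\thinspace'_{T_1})$ turns false, again contradicting that $eavs(\vec{e}\thinspace')$ becomes true. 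You need to supply this (or an equivalent) argument before your proof is complete.

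In the same-direction case your plan is essentially sound, and arguably cleaner than the paper's: your ``agree nowhere'' lemma, proved by mutex propagation along the connected underlying graph, strengthens the paper's appeal to $Eavs^t$ being a general mutex group, and your use of atomicity when $T_1\cap T_2=\emptyset$ parallels the paper's first claim. One smaller repair is still needed there, at exactly the point you flagged as delicate: you justify that the changed atom $p_3(\vec{c}|_{T_3})$ holds in $\sigma$ by the direction of $p_3$'s \emph{own} change, but $\sigma$ was fixed by the direction of the \emph{eavs} changes, and the two need not agree (e.g.\ both eavs instances are deleted, so $\sigma$ is the pre-state, while the $p_3$-atom is added and holds only in the post-state). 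The fix is routine: if $p_3(\vec{c}|_{T_3})$ holds only in the other state, note that under the assumption $\vec{e}|_{T_3}\neq\vec{c}|_{T_3}$ the atom $p_3(\vec{e}|_{T_3})$ is not the changed atom, hence persists into that state, and run the same mutex-uniqueness argument there; either way $\vec{e}|_{T_3}=\vec{c}|_{T_3}$ follows and your contradiction closes.
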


\begin{proof}
Suppose an action $a$ changes the truth values of both $eavs(\vec{e})$ and $eavs(\vec{e}\thinspace')$. We first show that there must exist $p_1(T_1)$ and $p_2(T_2)$ from $eavs(T)$ s.t. $T_1\cap T_2 \neq \emptyset$ and $a$ changes the truth values of $p_1(\vec{e}_{T_1})$ and $p_2(\vec{e}\thinspace'_{T_2})$, where $\vec{e}_{T_1}$ denotes the restriction of $\vec{e}$ to ${T_1}$. Since $a$ changes the values of both $eavs(\vec{e})$ and $eavs(\vec{e}\thinspace')$, there must exist $p_1(T_1)$ and $p_2(T_2)$ from $eavs(T)$ s.t. $a$ changes the truth values of $p_1(\vec{e}_{T_1})$ and $p_2(\vec{e}\thinspace'_{T_2})$. If $T_1\cap T_2 \neq \emptyset$, we are done. So suppose $T_1\cap T_2 = \emptyset$. By Definition \ref{atomic},  there is $p_3(T_3)$ from $eavs(T)$ s.t. $T_1\cap T_3 \neq \emptyset $,  $T_2\cap T_3 \neq \emptyset $, and $a$ changes the value of $p_3(\vec{e}_{T_4},\vec{e}\thinspace'_{T_5})$, where $T_4=T_1\cap T_3$, and $T_5=T_3-T_1$. We show that $\vec{e}_{T_4}=\vec{e}\thinspace'_{T_4}$, thus 
$a$ changes the value of $p_3(\vec{e}\thinspace'_{T_4},\vec{e}\thinspace'_{T_5})$, \ie, $p_3(\vec{e}\thinspace'_{T_3})$.
Assume that $\vec{e}_{T_4}\neq \vec{e}\thinspace'_{T_4}$. Then $a$ does not change the value of  $p_3(\vec{e}\thinspace'_{T_4},\vec{e}\thinspace'_{T_5})$. If it stays false, then $eavs(\vec{e}\thinspace')$ stays false, a contradiction. If it stays true, since $p_3(\vec{e}_{T_4},\vec{e}\thinspace'_{T_5})$ changes, by mutex group, we have $\vec{e}_{T_4}= \vec{e}\thinspace'_{T_4}$, also a contradiction. 

Now let $t\in T_1\cap T_2$, we must have $\vec{e}_{t}=\vec{e}\thinspace'_{t}$, since $a$ can only have one argument of type $t$. There are two cases. 1) $p_1(\vec{e}_{T_1})$ and $p_2(\vec{e}\thinspace'_{T_2})$ are both false before (after resp.)  the change, then $eavs(\vec{e})$ and $eavs(\vec{e}\thinspace')$ are both true after (before resp.) the change. Since $Eavs^t$ is a general mutex group, from $\vec{e}_{t}=\vec{e}\thinspace'_{t}$, we get $\vec{e}=\vec{e}\thinspace'$. 2) $p_1(\vec{e}_{T_1})$ changes from true to false, but $p_2(\vec{e}\thinspace'_{T_2})$ changes from false to true (the symmetric case is similar). Then we must have $eavs(\vec{e}\thinspace')$ changes from false to true. Assume $\vec{e}_{T_1}\neq\vec{e}\thinspace'_{T_1}$. Then we must have $p_1(\vec{e}\thinspace'_{T_1})$ is false before the action since $p_1$ belongs to a mutex group, and remains false since $\vec{e}\thinspace'_{T_1}$ is not contained $a$'s arguments. This contradicts that $eavs(\vec{e}\thinspace')$ changes from false to true. So $\vec{e}_{T_1}=\vec{e}\thinspace'_{T_1}$. $p_1(\vec{e}\thinspace'_{T_1})$ also changes from true to false. Thus $eavs(\vec{e}\thinspace')$ remains false after the action, also a contradiction. So the second case is impossible. 
\end{proof}%
Here $pre(m(\alpha))$ means the precondition for executing the Golog program $m(\alpha)$.

\begin{proposition} \label{eqa} For any $\alpha$, $\textit{pre}(m(\alpha))\Leftrightarrow m(\textit{pre}(\alpha))$.
\end{proposition}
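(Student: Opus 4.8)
The plan is to compute both sides explicitly from the definitions and show they reduce to the same low-level formula. First I would expand $m(\textit{pre}(\alpha))$. By definition $\textit{pre}(\alpha)=\{B\in \textit{pre}(a_o)\}\cup\{N_i>0\mid N_i\in\vec{N}\}$, and the refinement mapping acts conjunct by conjunct: each propositional literal $B=p(\vec{o})$ maps to itself, while each $N_i>0$ with $N_i=(sts_i(T_i),eavs_i(T_i))$ maps to $\#\vec{x}_i.\,(sts_i(\vec{x}_i)\wedge eavs_i(\vec{x}_i))>0$, which is exactly $\exists\vec{x}_i.\,(sts_i(\vec{x}_i)\wedge eavs_i(\vec{x}_i))$. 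Hence
\[
m(\textit{pre}(\alpha))\equiv\bigwedge_{B\in \textit{pre}(a_o)}B\ \wedge\ \bigwedge_i\exists\vec{x}_i.\,(sts_i(\vec{x}_i)\wedge eavs_i(\vec{x}_i)).
\]

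Next I would compute $\textit{pre}(m(\alpha))$. Recall $m(\alpha)=\pi T.\,(\bigwedge_i sts_i(T_i)\wedge eavs_i(T_i)\wedge \textit{pre}(a_o))?;a_o$. A pick--test--act program is executable precisely when some instantiation of the picked variables satisfies the test and makes the subsequent primitive action possible; since for STRIPS $Poss(a_o)\equiv\textit{pre}(a_o)$ and the test already asserts $\textit{pre}(a_o)$, the precondition is $\exists T.\,(\bigwedge_i sts_i(T_i)\wedge eavs_i(T_i)\wedge \textit{pre}(a_o))$. I would then reconcile the two expressions: the propositional conjuncts $\bigwedge_B B$ of $\textit{pre}(a_o)$ contain no baggable arguments and so may be pulled outside $\exists T$; and the defining condition for $\alpha$ to exist, namely $\bigwedge_i eavs_i(T_i)\models \textit{pre}(a_o)-\{B\in \textit{pre}(a_o)\}$, makes the non-propositional part of $\textit{pre}(a_o)$ redundant inside the existential, so it can be dropped. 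This leaves
\[
\textit{pre}(m(\alpha))\equiv\Big(\bigwedge_{B\in \textit{pre}(a_o)}B\Big)\ \wedge\ \exists T.\bigwedge_i(sts_i(T_i)\wedge eavs_i(T_i)).
\]

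The only remaining gap, and the heart of the argument, is the equivalence $\exists T.\bigwedge_i(sts_i\wedge eavs_i)\Leftrightarrow\bigwedge_i\exists\vec{x}_i.(sts_i\wedge eavs_i)$. The forward direction is immediate. For the converse I would invoke suitability: the numeric variables are suitable for $a(T,\vec{y})$ exactly when $T_1,\ldots,T_k$ partition $T$, so the tuples $\vec{x}_i$ are pairwise disjoint and each formula $sts_i(\vec{x}_i)\wedge eavs_i(\vec{x}_i)$ has free variables only among $\vec{x}_i$ (its non-baggable arguments being the fixed constants $\vec{o}$). When the conjuncts share no free variables the existential distributes over the conjunction, and witnesses chosen separately for each $\exists\vec{x}_i$ combine into a single witness for $\exists T$.

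The main obstacle is this last distribution step, which is where the partition hypothesis in ``suitable'' is indispensable: without disjointness of the $T_i$ a common variable could be forced to two incompatible values, and the two sides would genuinely differ. I expect the routine parts (the two computations above) to be mechanical once one is careful that the test inside $m(\alpha)$ and $Poss(a_o)$ encode the same STRIPS precondition; the substantive content is confined to justifying the quantifier exchange from disjointness of the free-variable sets. Throughout, the equivalence is understood over low-level situations, where $\phi_\Pi$ holds, though it is worth noting that $\phi_\Pi$ is not actually needed for this proposition, since the argument is purely about the structure of the preconditions and the commuting of existentials.
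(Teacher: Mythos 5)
Your proof is correct and takes essentially the same route as the paper's: expand both sides, use the defining condition $\bigwedge_i eavs_i(T_i)\models \textit{pre}(a_o)-\{B\in \textit{pre}(a_o)\}$ to discharge the non-propositional preconditions inside the existential, and reduce both sides to the common form $\bigwedge\{B\in \textit{pre}(a_o)\}\wedge\exists \vec{x}.\bigwedge_i (sts_i(\vec{x}_i)\wedge eavs_i(\vec{x}_i))$. The one point where you are more explicit than the paper is the quantifier-distribution step (a single $\exists T$ versus a conjunction of separate $\exists\vec{x}_i$), which you correctly justify via the pairwise disjointness of the $T_i$ guaranteed by suitability; the paper's proof asserts this equivalence silently.
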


\begin{proof}
Let $\alpha=a(\vec{N},\vec{o})$. Then $\textit{pre}(\alpha)=\bigwedge_i N_i>0\wedge \bigwedge\{B\in \textit{pre}(a(T,\vec{o}))\}$. $\textit{pre}(m(\alpha))=\exists \vec{x}. \bigwedge_i sts_i(\vec{x}_i)\wedge eavs(\vec{x}_i)\wedge \textit{pre}(a(T,\vec{o})) $. Note that $\bigwedge_i eavs(\vec{x}_i) \models \textit{pre}(a(T,\vec{o}))-\{B\in \textit{pre}(a(T,\vec{o}))\}$. Thus both $\textit{pre}(m(\alpha))$ and $m(\textit{pre}(\alpha))$ are equivalent to $\exists \vec{x}. \bigwedge_i sts_i(\vec{x}_i)\wedge eavs(\vec{x}_i)\wedge \bigwedge\{B\in \textit{pre}(a(T,\vec{o}))\}$.
\end{proof}%

\begin{proposition}\label{tg} Let $s_h \sim_m s_l$. For any abstract action $\alpha=a(\vec{N},\vec{o})$ where $a$ is atomic, if $s_h$ leads to $s'_h$ by execution of $\alpha$, then $s_l$ leads to $s'_l$ by execution of $m(\alpha)$ s.t. $s'_h \sim_m s'_l$, and vice versa.
\end{proposition}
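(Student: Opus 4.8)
The plan is to decompose a single bisimulation step into three transfers --- precondition, propositional effect, and numeric effect --- and to exploit the fact that the changes induced by $\alpha$ and by $m(\alpha)$ depend only on the AVS structure, not on which concrete objects are touched. First I would pin down the tuple $\vec{e}$ on which $m(\alpha)$ acts. In the forward direction I am free to choose it: since $\textit{pre}(\alpha)$ forces $N_i>0$ for every $i$ and $s_h\sim_m s_l$ makes each count $m(N_i)$ positive in $s_l$, a tuple $\vec{e}_i$ with $sts_i(\vec{e}_i)\wedge eavs_i(\vec{e}_i)$ exists for each $i$, and because $T_1,\ldots,T_k$ partition $T$ these assemble into one $\vec{e}$. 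In the backward direction $\vec{e}$ is the tuple selected by the $\pi T$ of $m(\alpha)$. In either case $\vec{e}$ passes the test $\bigwedge_i sts_i(\vec{x}_i)\wedge eavs_i(\vec{x}_i)\wedge \textit{pre}(a_o)?$, so $a_o(\vec{e})$ is executable in $s_l$ and determines $s'_l$, while $s'_h$ is the deterministic $\textit{eff}(\alpha)$-successor of $s_h$; since the count changes below are independent of the chosen $\vec{e}$, any valid choice yields the same $s'_l$ up to $\sim_m$, which is what makes both directions go through uniformly.

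Precondition executability transfers in both directions at once: by Proposition \ref{eqa}, $\textit{pre}(m(\alpha))\Leftrightarrow m(\textit{pre}(\alpha))$, and since $\textit{pre}(\alpha)$ is a HL situation-suppressed formula, Proposition \ref{p1} gives $s_h\models \textit{pre}(\alpha)$ iff $s_l\models m(\textit{pre}(\alpha))$. The propositional effects are immediate: the literals $l\in\textit{eff}(a_o)$ copied into $\textit{eff}(\alpha)$ mention only the fixed non-baggable objects $\vec{o}$, so they are exactly the propositional atoms that $a_o(\vec{e})$ flips in $s_l$, regardless of $\vec{e}$; every other propositional variable is untouched on both sides, so $s'_h$ and $s'_l$ agree on the propositional variables by $s_h\sim_m s_l$.

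The heart of the argument is the numeric transfer: for each $N=(sts(T'),eavs(T'))$ I must show that its value in $s'_h$ --- the $s_h$-value shifted by the $inc/dec$ prescribed in $\textit{eff}(\alpha)$ --- equals the count $m(N)$ in $s'_l$. The engine is Proposition \ref{changeone}: because $a$ is atomic, $a_o(\vec{e})$ flips the truth value of $eavs(T')$ for at most one tuple, and since subtypes are goal-determined hence time-invariant, $m(N)$ can move by at most one. I would then match the three clauses of $\textit{eff}(\alpha)$ to the actual count change, using that both $s_l$ and, by the mutex invariant (Definition \ref{siv}) since $a_o$ is executable, $s'_l$ satisfy $\phi_\Pi$. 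For a variable $N_i$ receiving $dec(N_i,1)$, the chosen $\vec{e}_i$ satisfied $eavs_i$ before but, by inconsistency of $eavs_i\wedge\textit{eff}(a_o)\wedge\phi_\Pi$, fails it afterward, so exactly one tuple leaves and the count drops by one. For a variable $N'$ receiving $inc(N',1)$, the entailment $\bigwedge_i(eavs_i(T_i)-del(a_o))\wedge add(a_o)\models eavs(T')$ together with the matching subtypes makes $\vec{e}_{T'}$ newly satisfy $eavs(T')$, so the count rises by one. For every remaining variable the relevant tuple keeps its extended-AVS value, so Proposition \ref{changeone} forces the count unchanged.

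The hard part will be exactly this matching: proving that the syntactic consistency and entailment tests in $\textit{eff}(\alpha)$ capture the semantic count changes with neither false positives nor missed changes. Two points need care. First, in the case where $N_i$ survives (that is, $eavs_i\wedge\textit{eff}(a_o)\wedge\phi_\Pi$ is consistent) I must rule out a spurious change to $m(N_i)$ from some other tuple $\vec{e}'_i\neq\vec{e}_i$; here I would argue that only tuples sharing the baggable arguments actually touched by $a_o(\vec{e})$ can move, and then invoke Proposition \ref{changeone} to conclude that if $\vec{e}_i$ itself does not move then no tuple does. Second, for the $inc$ clause the target set $T'$ need not coincide with a single block $T_i$ of the partition, so I must track how $eavs(T')$ is assembled across blocks and use that $Eavs^t$ is a general mutex group (Proposition \ref{gmg}) to guarantee that $\vec{e}_{T'}$ genuinely switched extended AVS rather than already satisfying $eavs(T')$ beforehand. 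Once these bookkeeping facts are established, the count on each side agrees with the $inc/dec$ bookkeeping of $\textit{eff}(\alpha)$, yielding $s'_h\sim_m s'_l$ in both directions.
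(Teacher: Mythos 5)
Your proposal is correct and follows essentially the same route as the paper's proof: precondition transfer via Proposition \ref{eqa}, an arbitrary witness tuple $\vec{e}$ satisfying $\bigwedge_i sts_i(\vec{x}_i)\wedge eavs_i(\vec{x}_i)$, identical treatment of the propositional effects, and a three-case analysis of the numeric effects ($dec$, $inc$, unchanged) resting on Proposition \ref{changeone}. If anything, your writeup is more explicit than the paper's, which simply asserts the two bookkeeping facts you flag as needing care --- that untouched tuples cause no spurious count changes, and that the $inc$ target $eavs(T')$ was not already satisfied by $\vec{e}_{T'}$ beforehand.
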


\begin{proof} First, $s_h \models \textit{pre}(\alpha)$ iff $s_l \models m(\textit{pre}(\alpha))$, by Proposition \ref{eqa}, iff $s_l \models \textit{pre}(m(\alpha))$.  Let $\vec{e}$ be any instantiation of $\vec{x}$ satisfying $\bigwedge_ists_i(\vec{x}_i)\wedge eavs_i(\vec{x}_i)$. Let $s'_h$ result from $s_h$ by execution of $\alpha$, $s'_l$ result from $s_l$ by execution of $a(\vec{e},\vec{o})$. We show that $s'_h \sim_m s'_l$. For Boolean variables, $\alpha$ has the same effect on them as $a(\vec{e},\vec{o})$ does. For numeric variables, there are three cases. a) $\alpha$ has effect $dec(N_i,1)$. Then $eavs_i(\vec{e}_i)$ changes from truth to false. By Proposition \ref{changeone}, $m(N_i)$ is decreased by one from $s_l$ to $s'_l$. b) $\alpha$ has effect $inc(N,1)$. Then $eavs(\vec{e})$ changes from false to truth. By Proposition \ref{changeone}, $m(N_i)$ is increased by one from $s_l$ to $s'_l$. c) $\alpha$ has no effect on $N$. Then $a(\vec{e},\vec{o})$ does not affect the value of any predicate in $eavs(\vec{x})$. So $v(m(N))$ does not change from $s_l$ to $s'_l$.
\end{proof}

Given an instance $\mathcal{P} $ of a proper baggable domain, we have defined its BQNP abstraction $\mathcal{B_P}$. We now define the LL g-planning problem $\mathcal{G_P}$,
and show that $\mathcal{B_P}$ is a sound and complete abstraction of $\mathcal{G_P}$.
Intuitively, any instance of $\mathcal{G_P}$ shares the same BQNP abstraction with $\mathcal{P}$. 

\begin{definition}\rm Given an instance $\mathcal{P}=\langle D, O, I, G \rangle$, where $D=\langle T, P, A\rangle$, of a proper baggable domain. Let $\mathcal{B_P}$ with initial state $ I_h$ and goal $G_h$ be its BQNP abstraction. 
The LL g-planning problem for $\mathcal{P}$ is a tuple $\mathcal{G_P}=\langle D', O_n, I_l, G_l\rangle$, where 
$D'=\langle T', P, A\rangle$ and $T'$ is the set of subtypes;
$O_n$ is the set of non-baggable objects from $\mathcal{P}$;
$I_l=\phi_\Pi\land m(I_h)$ and $G_l=m(G_h)$.
\end{definition}

Since $\phi_\Pi$ is a part of $I_l$, the mutex groups hold in $I$. 

\begin{theorem} Let $\mathcal{P}$ be an instance of a proper baggable domain. Then 
$\mathcal{B_P}$ is a sound abstraction of $\mathcal{G_P}$.	
\end{theorem}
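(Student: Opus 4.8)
The plan is to instantiate the definition of sound $m$-abstraction directly. Given an arbitrary model $M_l$ of $\mathcal{G_P}$, I would construct a companion model $M_h$ of $\mathcal{B_P}$ together with an $m$-bisimulation $R$ between them, and then verify the goal-agreement clause. The construction of $M_h$ is essentially forced by the abstraction: since a BQNP model is fixed once the initial values of all fluents are chosen, I would define $S_0^{M_h}$ by evaluating the refinement mapping at the LL initial situation---set each numeric variable $N$ to the integer that $m(N)$ takes at $S_0^{M_l}$, and each propositional variable $B=p(\vec{o})$ to the truth value of $p(\vec{o})$ there---and let the remainder of $M_h$ be generated by the BQNP successor-state semantics. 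Two facts then need checking: that $S_0^{M_h}\models I_h$, which holds because $I_l=\phi_\Pi\land m(I_h)$ forces $S_0^{M_l}\models m(I_h)$ while $I_h$ was defined as exactly the qualitative ($N>0$ versus $N=0$) evaluation of these counts; and that $S_0^{M_h}\sim_m S_0^{M_l}$, which is immediate from the construction for both propositional and numeric (counting) fluents.

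Next I would take $R$ to be the set of pairs $\langle s_h,s_l\rangle$ of reachable situations with $s_h\sim_m s_l$. The root pair lies in $R$ by the previous paragraph, and the isomorphism clause holds by the definition of $R$. The two transfer clauses carry the real weight, and both are delivered almost verbatim by Proposition \ref{tg}: because $\mathcal{G_P}$ arises from a \emph{proper} baggable domain, every action schema is atomic, so for each abstract action $\alpha=a(\vec{N},\vec{o})$ (playing the role of an HL action type $A(\vec{x})$) the hypothesis of Proposition \ref{tg} is met. Its forward half matches an $\alpha$-step at $s_h$ by an $m(\alpha)$-step at $s_l$ with $m$-isomorphic successors, and its ``vice versa'' half handles the other direction; note that $m(\alpha)$ carries the $\pi T$ choice, but every tuple $\vec{e}$ selected yields an $s'_l$ that is $m$-isomorphic to the unique $\alpha$-successor, since each counter changes by at most one in a fixed direction (Proposition \ref{changeone}). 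I would also invoke Proposition \ref{eqa} to align $\textit{pre}(\alpha)$ with $\textit{pre}(m(\alpha))$, so an action is HL-executable exactly when its refinement is LL-executable. Finally, that the matched LL successor again lies in $R$ follows because $\Pi$ is a mutex invariant, so $\phi_\Pi$ is preserved along every executable action from $I_l$ and hence holds at every reachable $s_l$.

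For the goal-agreement clause I would appeal to Proposition \ref{p1}: for any $\langle s_h,s_l\rangle\in R$ we have $s_h\sim_m s_l$, hence $s_h\models G_h$ iff $s_l\models m(G_h)$, and $m(G_h)=G_l$ by the definition of $\mathcal{G_P}$. Equivalently one could route through Proposition \ref{Giff}, using that $\phi_\Pi$ holds at every reachable $s_l$.

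I expect the main obstacle to be the bookkeeping around the model construction and the bisimulation invariant rather than any isolated computation: pinning down that the chosen initial numeric assignment genuinely yields a model of the HL BAT, and---more delicately---maintaining that $\phi_\Pi$ holds at \emph{every} situation paired by $R$ so that Propositions \ref{tg} and \ref{Giff} may be applied at each step. The substance of the action matching (that one atomic LL action moves exactly one tuple between bags, changing each counter by at most one) has already been isolated in Propositions \ref{changeone} and \ref{tg}, so the remaining task is to assemble these pieces into a bisimulation rather than to reprove them.
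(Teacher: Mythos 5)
Your proposal is correct and follows essentially the same route as the paper's proof: construct $M_h$ by evaluating the refinement mapping at $S_0^{M_l}$ (so that $S_0^{M_h}\models I_h$ and $S_0^{M_h}\sim_m S_0^{M_l}$), obtain the $m$-bisimulation from $m$-isomorphic pairs closed under matched executions of $\alpha$ and $m(\alpha)$ using Propositions \ref{eqa} and \ref{tg} (which is where properness/atomicity enters), and conclude goal agreement at related pairs. The only cosmetic differences are that you route the goal clause through Proposition \ref{p1} where the paper cites Proposition \ref{Giff}, and that you make explicit the preservation of $\phi_\Pi$ along reachable LL situations, a point the paper leaves implicit.
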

\begin{proof}
We prove that for each model $M_l$ of   $\mathcal{G_P}$, \ie, an instance of $\mathcal{G_P}$, there is a model $M_h$ of  $\mathcal{B_P}$, \ie, an instance of $\mathcal{B_P}$, s.t. there is an $m-$bisimulation $R$ between $M_h$ and $M_l$. 
First, for the initial situation $S^{M_{l}}_{0}$ of $M_l$, we define the initial situation $S^{M_{h}}_{0}$ of $M_h$ so that $S^{M_{h}}_{0} \sim_{m} S^{M_{l}}_{0}$. Since $S^{M_{l}}_{0}\models m(I_h)$, $S^{M_{h}}_{0}\models I_h$. 
We use the induction method to specify the $m$-bisimulation relation $R$. First, let $\langle S^{M_{h}}_{0}, S^{M_{l}}_{0}\rangle\in R$. As the induction step, if $\langle s_l,s_h\rangle\in R$ and for any HL action $\alpha$, if $s_h$ leads to $s'_h$ via execution of $\alpha$, and $s_l$ leads to $s'_l$ via execution of $m(\alpha)$, then let $\langle s'_h,s'_l\rangle\in R$. By Proposition \ref{tg}, we have $s'_h \sim_{m} s'_l$.
Finally, if $\langle s_h, s_l\rangle \in R$, then $s_h \sim_{m} s_l$.
By Proposition \ref{Giff}, $s_h \models G_h$ iff $s_l\models m(G_h)$.
\end{proof}

\begin{theorem} Let $\mathcal{P}$ be an instance of a proper baggable domain. Then $\mathcal{B_P}$ is a complete abstraction of $\mathcal{G_P}$.
\end{theorem}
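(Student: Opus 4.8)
The plan is to mirror the soundness proof in the opposite direction: for each model (instance) $M_h$ of $\mathcal{B_P}$ I would construct a model $M_l$ of $\mathcal{G_P}$ together with an $m$-simulation $R$ relating their initial situations, and then propagate $R$ along transitions. The soundness proof built the high-level instance from a given low-level one by counting; here the only genuinely new ingredient is the reverse construction, namely realizing a prescribed assignment of non-negative integers to the numeric variables (and truth values to the propositional variables) by an actual low-level state over a freshly built object domain. Everything downstream---closure of $R$ under transitions and the goal equivalence---can then be reused almost verbatim.

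First I would fix the initial situation $S_0^{M_h}$ of $M_h$, which assigns an integer value to each $N=(sts(T),eavs(T))\in V_N$ and a truth value to each $B\in V_B$, with $S_0^{M_h}\models I_h$. I would keep the non-baggable domain equal to $O_n$ and build the baggable objects bag by bag: for every numeric variable $N$ whose value is $n>0$ I introduce $n$ fresh tuples, one fresh object of the appropriate subtype for each type in $T$, and set the atoms of each tuple so that it satisfies $eavs(\vec{x})$, the non-baggable arguments being the fixed constants recorded in the attribute values; each propositional atom $p(\vec{o})$ is set according to its truth value in $S_0^{M_h}$. Crucially, $N>0$ in $I_h$ only when $I\models m(N)>0$ in the fixed instance $\mathcal{P}$, so every variable we must realize is already witnessed by a tuple of $I$ and hence consistent with $\phi_\Pi$ and its subtype constraints; thus each such tuple can indeed be instantiated, and variables that are inconsistent with $\phi_\Pi$ are forced to $0$ in $I_h$ and require no objects.

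Next I would check that the resulting state $S_0^{M_l}$ is a legitimate low-level initial state and is $m$-isomorphic to $S_0^{M_h}$. Since every fresh object receives exactly one attribute value from each of its mutex groups, each mutex group formula holds, so $S_0^{M_l}\models\phi_\Pi$. For the counts, the general mutex group property (Proposition \ref{gmg}) guarantees that the extended AVSes partition the objects, so a tuple built for one variable can never also satisfy the $eavs$ of another: across distinct tuples the connecting predicates of a maximal connected extended AVS are false, ruling out spurious cross-tuple matches, while objects lying in a multi-type tuple carry a multi-type extended AVS and therefore cannot match any single-type one. Hence $m(N)$ evaluates to exactly the prescribed value, and together with the propositional atoms this gives $S_0^{M_h}\sim_m S_0^{M_l}$; by Proposition \ref{p1}, $S_0^{M_l}\models m(I_h)$, so $S_0^{M_l}\models I_l=\phi_\Pi\wedge m(I_h)$ and $M_l$ is a model of $\mathcal{G_P}$.

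Finally I would define $R$ inductively exactly as in the soundness proof, putting $\langle S_0^{M_h},S_0^{M_l}\rangle\in R$ and closing under matched transitions; Proposition \ref{tg} (applicable because the domain is proper, hence every action is atomic) supplies, for each high-level action $\alpha$ enabled at $s_h$, a matching execution of $m(\alpha)$ at $s_l$ landing in an $m$-isomorphic successor, so $R$ is an $m$-simulation. The goal equivalence on pairs of $R$ then follows from Proposition \ref{p1} since $G_l=m(G_h)$ (equivalently from Proposition \ref{Giff}, as $\phi_\Pi$ is invariant along every trajectory). I expect the main obstacle to be the middle step: verifying that the fresh-object construction reproduces the prescribed counts while satisfying $\phi_\Pi$. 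The delicate point is the coupling induced by multi-type extended AVSes---an object may be counted in variables shared by several subtypes---and the argument that no accidental cross-tuple matches arise rests squarely on maximal connectedness of extended AVSes and on $GM^t$ being a general mutex group (Proposition \ref{gmg}).
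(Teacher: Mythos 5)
Your proposal is correct and follows essentially the same route as the paper: for a given model $M_h$ of $\mathcal{B_P}$, build the low-level initial situation by introducing, for each numeric variable with positive value, that many fresh tuples of objects satisfying its extended AVS (the paper phrases this via ``subsubtypes'' of size $c_i$ per variable plus bijections $f_{t_0,t}$ forming the tuples, which is the same construction), then propagate an $m$-bisimulation along transitions via Proposition \ref{tg} and conclude goal equivalence via Proposition \ref{Giff}. If anything, your treatment of count exactness (no cross-tuple matches, via maximal connectedness and Proposition \ref{gmg}) and of realizability of positive variables is more explicit than the paper's brief non-interference remark.
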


\begin{proof}
We prove that for each model $M_h$ of  $\mathcal{B_P}$, there is a model $M_l$ of $\mathcal{G_P}$ s.t. there is a $m$-bisimulation between $M_h$ and $M_l$. We only show how to construct the LL initial situation. The rest of the proof is similar to the soundness proof.  
Let $S^{M_{h}}_{0}$ be the initial situation  of  $M_h$. For each propositional variable $B=p(\vec{o})$, let $S^{M_{l}}_{0} \models p(\vec{o})$ iff $S^{M_{h}}_{0} \models B$. We now describe the objects for each subtype. For each subtype $st$ of type $t$, let $V_{st}=\{N_1,\ldots,N_m\}$. For each $i$, let $S^{M_{h}}_{0}\models N_i=c_i$. We let $st_i$, the subsubtype associated with $N_i$, consist of $c_i$ objects, and we let subtype $st$ be the union of $st_i$. We now define the set of true atoms. Consider $N=(sts(T),eavs(T))$. We choose $t_0 \in T$. Then we have for each $t\in T-\{t_0\}$, $st_N(t_0)$ and $st_N(t)$ have the same size, where $st_N(t_0)$ denotes the subsubtype of $st(t_0)$ associated with $N$. We define a bijection $f_{t_0,t}$ from  $st_N(t_0)$ to $st_N(t)$. Now for each $e\in st_N(t_0)$, we define the tuple $\beta(e)$ as follows: $\beta(e)(t_0)=e$, $\beta(e)(t)=f_{t_0,t}(e)$ for $t\in T_1-\{t_0\}$. Now for each $p(T')\in eavs(T)$, for each $e\in st_N(t_0)$, we let $p$ hold for the projection of $\beta(e)$ to $T'$. If a ground atom is not defined to be true in the above process, then it is defined to be false. We now show that the above definitions for different numeric variables do not interfere with each other. Let $N_1$ and $N_2$ be two different numeric variables. If there is a subtype  $st$ of type $t$ s.t. $N_1,N_2\in V_{st}$, then the above definitions deal with different subsubtypes of $st$; otherwise, the above definitions deal with different types or subtypes. Finally, it is each to prove that for each $N$, if $S^{M_{h}}_{0}\models N=c$, then $S^{M_{l}}_{0}\models m(N)=c$. Thus we have $S^{M_{h}}_{0} \sim_{m} S^{M_{l}}_{0}$.
\end{proof}

Thus, by Theorem \ref{thm1}, a policy $\pi$ solves $\mathcal{B_P}$ iff its refinement solves any instance of  $\mathcal{G_P}$.

\section{Implementation and Experiments}

\begin{table*}[ht]
\small
\centering
\label{tab:Generation}
\scalebox{0.8}{
\begin{tabular}{l|c|c|ccc|cc|ccc|c}
   \hline
    &  & Problem &  \multicolumn{3}{c|}{ Original } & \multicolumn{2}{c|}{ Abstract } &\multicolumn{3}{c|}{ BQNP } &Solving\\
\cline { 4 - 6 } \cline { 7 - 8 } \cline { 9 - 11 }%
   Domain  & $| T^B |$ & Name & $|O_B|/|O_{NB}|$  & $|P_{\downarrow}|/\#facts$ & $| A_{\downarrow}|$ & ABS time(s) & $\#sts$ & $\left|V_N\right|$ & $\left|V_B\right|$ & $|Ops|$ & BQNP time(s) \\
\hline
Gripper-Sim & 2 & prob1-1 & 7/2 & 24/0 & 44 & 0.0110 & 2 & 4 & 2 & 6 & 0.0200 \\
 &  & prob1-2 & 22/2 & 84/0 & 164 & 0.0170 & 2 & 4 & 2 & 6 & 0.0195 \\
 &  & prob2-3 & 25/3 & 168/0 & 609 & 0.0660 & 4 & 13 & 3 & 24 & M \\
\hline
Gripper-HL & 2 & prob1-1 & 10/2 & 40/0 & 70 & 0.0330 & 2 & 6 & 2 & 8 & 16.1030 \\
 &  & prob1-2 & 22/2 & 88/0 & 166 & 0.0380 & 2 & 6 & 2 & 8 & 18.6693 \\
 &  & prob2-1 & 10/2 & 40/0 & 70 & 0.0650 & 3 & 10 & 2 & 13 & M \\
\hline
Gripper-HLWB & 2 & prob1-1 & 10/2 & 56/8 & 70 & 0.1140 & 2 & 10 & 2 & 13 & M \\
 &  & prob2-1 & 10/2 & 56/8 & 70 & 0.1140 & 3 & 10 & 2 & 13 & M \\
\hline
TyreWorld & 1 & prob1-1 & 4/0 & 16/0 & 12 & 0.0110 & 1 & 4 & 0 & 3 & 0.0030 \\
\hline
Ferry & 1 & prob1-1 & 5/2 & 17/0 & 24 & 0.0060 & 1 & 3 & 3 & 6 & 0.0079 \\
 &  & prob2-2 & 5/6 & 41/0 & 96 & 0.0200 & 1 & 7 & 7 & 42 & M \\
\hline
Logistics & 1 & prob1-1 & 4/12 & 68/4 & 328 & 0.0560 & 1 & 10 & 20 & 60 & TO \\
\hline
Transport & 1 & Avg(20) & 6.25/20.70 & 354.80/48.30 & 10790.25 & 1.76 & 5.05 & 82.25 & 42.75 & 1446.40 & TO \\
\hline
Elevators & 1 & Avg(20) & 4.65/15.40 & 525.35/104.40 & 59471.40 & 2.97 & 4.00 & 61.15 & 89.20 & 2799.60 & TO \\
\hline
Floortile & 1 & Avg(20) & 2.40/28.40 & 3331.30/87.00 & 17668.00 & 1.34 & 1.00 & 52.80 & 79.20 & 311.60 & TO \\
\hline
Nomystery & 1 & Avg(20) & 7.50/147.00 & 4495651.25/11862.40 & 283280000.15 & 6.73 & 4.80 & 50.20 & 146.00 & 3654.50 & M \\
\hline
Zenotravel & 1 & Avg(20) & 10.25/18.70 & 445.40/6.00 & 203215.25 & 5.73 & 6.25 & 102.45 & 57.80 & 6164.10 & M  \\
\hline
\end{tabular}
}
\caption{Results of abstracting and solving with time and memory limits of 30m and 8GB. ``TO" and ``M" are used to indicate timeouts and memory out.
$|T^B|$ is the number of baggable types. $|O_B|$ and $|O_{NB}|$ indicate the number of baggable and non-baggable objects. $|P_{\downarrow}|$ is the number of ground atoms. 
$\#sts$ is the number of subtypes.
$|V_N|$ and $|V_B|$ are the number of numerical variables and propositional variables.  $|A_{\downarrow}|$ and $|Ops|$ are the number of ground actions and abstract actions. Avg(n) represents the average results on n problems. ``prob$i$-$j$" with the same $i$ in a domain can be abstracted into the same BQNP problem, where bigger $j$ means bigger problem. 
}
\end{table*}

Based on the proposed abstraction method, we implemented an automated abstraction system ABS with input: the domain and problem description in PDDL format of a planning instance. \citeauthor{HELMERT2009503} (\citeyear{HELMERT2009503}) proposed an algorithm for automatically obtaining mutex groups by focusing on effects of actions. In this paper, we use their system to generate mutex groups whose predicates appear in action effects. For those mutex groups whose predicates do not appear in action effects, we automatically generate them by examining the initial state. 
We implemented a naive BQNP solver BQS using the idea at the end of Section 3, and use the solver to solve the BQNP problems output by ABS. 

All the experiments were conducted on a Windows machine with 2.9GHz Intel 10700 CPU and 16GB memory.

We select $9$ classical planning domains: Gripper, TyreWorld, Ferry, Logistics, Transport, Elevators, Floortile, Nomystery and Zenotravel. For Gripper, we consider 3 versions,  depending on the number of mutex groups. Gripper-Sim is the simplest version, the mutex group of $ball$ is $\{at, carry\}$,  $gripper$ has one $\{carry, free\}$. In Gripper-HL, we add a mutex group $\{HE, LE\}$ for  $gripper$. In Gripper-HLWB, we introduce another mutex group $\{white, black\}$ for $ball$, thus this version is same as in Example \ref{g1}. For all 3 versions, there are two baggable types, and as argued in Section 4.2, the domains are proper. For each of the rest 8 domains, there is only one baggable type, and hence the domain is proper. 
In TyreWorld, the mutex groups of $wheel$ are $\{flat, inflated\}$ and $\{ have, fastened\}$.
In Ferry, the mutex group of $car$ is $\{$\textit{at, on}$\}$.
Logistics is a classic domain, the  mutex group of baggable type $package$ is $\{$\textit{package-at, in-truck, in-airplane}$\}$.
The others are from the IPC competitions.

In our implementation of abstraction, we introduce an optimization trick to remove redundant variables and actions. 
In some domains like Transport, Elevators, there are lots of ground atoms whose truth value remain unchanged. We refer to them as 
 ``facts". We remove these atoms from the HL Boolean variables. Also, we remove numeric variables (HL actions resp.) whose EAVSes  (preconditions resp.)  conflict with the truth values of these atoms. 

As shown in Table 1, for all problems, ABS can produce the BQNP abstractions efficiently (even for problems in Nomystery with $10^8$ ground actions and $10^6$ ground atoms), and the number of abstract actions and variables is significantly less than that of the LL ground actions and atoms (reduced by 50\% to 99\%).  

Our BQNP solver is indispensable for our experimentation, because some BQNP problems BQS can solve cannot be solved with a QNP solver such as DSET or FONDASP. However, BQS suffers from scalability. Solving BQNP with a large state space is beyond the ability of BQS. Especially when a policy involves actions with effects on many variables, BQS quickly reaches the memory limit, 
as observed in examples like Gripper-Sim, Gripper-HL, and Ferry.

\section{Conclusions}

In this paper, we identify a class of STRIPS domains called proper baggable domains, and propose an automatic method to derive a BQNP abstraction from a planning instance of a proper baggable domain. 
Based on Cui et al.'s work, we prove the BQNP abstraction is sound and complete for a g-planning problem whose instances share the same BQNP abstraction with the given instance. Finally, we implemented an automatic abstraction system and a basic BQNP solver, and our experiments on a number of planning domains show promising results. 
We emphasize that our work distinguishes from the work of   on reformulation \cite{Riddle16Baggy,Compiling2016} in that they aim at improving the efficiency of solving a planning instance while we target at solving a g-planning problem induced from a planning instance. Also, our work  distinguishes from the work of  \cite{Illanes19QP} in that they exploit existing work to do the abstraction while we propose a novel abstraction method with both soundness and completeness results. Our research raises the need to improve the scalability of QNP solvers and investigate into QNP solvers for compact policies, and these are our future exploration topics.

\section{Acknowledgments}

We thank the annonymous reviewers for helpful comments. We acknowledge support from the Natural Science Foundation of China under Grant No. 62076261.

\bigskip

\bibliography{aaai25}

\begin{thebibliography}{24}
\providecommand{\natexlab}[1]{#1}

\bibitem[{Aguas, Celorrio, and Jonsson(2016)}]{Aguas16GP}
Aguas, J.~S.; Celorrio, S.~J.; and Jonsson, A. 2016.
\newblock Generalized Planning with Procedural Domain Control Knowledge.
\newblock In \emph{Proceedings of the 26th International Conference on Automated Planning and Scheduling, {ICAPS}}, 285--293.

\bibitem[{Banihashemi, De~Giacomo, and Lesp{\'{e}}rance(2017)}]{BanihashemiGL17}
Banihashemi, B.; De~Giacomo, G.; and Lesp{\'{e}}rance, Y. 2017.
\newblock Abstraction in Situation Calculus Action Theories.
\newblock In \emph{Proceedings of the Thirty-First {AAAI} Conference on Artificial Intelligence}, 1048--1055.

\bibitem[{Bonet, Franc{\`e}s, and Geffner(2019)}]{B2019a}
Bonet, B.; Franc{\`e}s, G.; and Geffner, H. 2019.
\newblock Learning Features and Abstract Actions for Computing Generalized Plans.
\newblock In \emph{Proceedings of The 33rd AAAI Conference on Artificial Intelligence, {AAAI}}, 2703–2710.

\bibitem[{Bonet et~al.(2019)Bonet, Fuentetaja, E-Martín, and Borrajo}]{B2019b}
Bonet, B.; Fuentetaja, R.; E-Martín, Y.; and Borrajo, D. 2019.
\newblock Guarantees for Sound Abstractions for Generalized Planning.
\newblock In \emph{Proceedings of the Twenty-Eighth International Joint Conference on Artificial Intelligence, {IJCAI-19}}, 1566--1573.

\bibitem[{Bonet and Geffner(2018)}]{Bonet18Features}
Bonet, B.; and Geffner, H. 2018.
\newblock Features, Projections, and Representation Change for Generalized Planning.
\newblock In \emph{Proceedings of the 27th International Joint Conference on Artificial Intelligence, {IJCAI}}, 4667--4673.

\bibitem[{Cui, Kuang, and Liu(2023)}]{cui2023}
Cui, Z.; Kuang, W.; and Liu, Y. 2023.
\newblock Automatic Verification for Soundness of Bounded {QNP} Abstractions for Generalized Planning.
\newblock In \emph{Proceedings of the Thirty-Second International Joint Conference on Artificial Intelligence, {IJCAI} 2023, 19th-25th August 2023, Macao, SAR, China}, 3149--3157.

\bibitem[{Cui, Liu, and Luo(2021)}]{cui}
Cui, Z.; Liu, Y.; and Luo, K. 2021.
\newblock A Uniform Abstraction Framework for Generalized Planning.
\newblock In \emph{Proceedings of the 30th International Joint Conference on Artificial Intelligence, {IJCAI}}, 1837--1844.

\bibitem[{Franc{\`{e}}s, Bonet, and Geffner(2021)}]{France21LearningFromExample}
Franc{\`{e}}s, G.; Bonet, B.; and Geffner, H. 2021.
\newblock Learning General Planning Policies from Small Examples Without Supervision.
\newblock In \emph{Proceedings of the 35th {AAAI} Conference on Artificial Intelligence}, 11801--11808.

\bibitem[{Fuentetaja and de~la Rosa(2016)}]{Compiling2016}
Fuentetaja, R.; and de~la Rosa, T. 2016.
\newblock Compiling irrelevant objects to counters. Special case of creation planning.
\newblock \emph{AI Commun.}, 29: 435--467.

\bibitem[{Helmert(2002)}]{De2002}
Helmert, M. 2002.
\newblock Decidability and Undecidability Results for Planning with Numerical State Variables.
\newblock In \emph{Proceedings of AIPS-02}.

\bibitem[{Helmert(2009)}]{HELMERT2009503}
Helmert, M. 2009.
\newblock Concise finite-domain representations for PDDL planning tasks.
\newblock \emph{Artificial Intelligence}, 173: 503--535.

\bibitem[{Hu and De~Giacomo(2011)}]{Hu11MultipleEnvironments}
Hu, Y.; and De~Giacomo, G. 2011.
\newblock Generalized Planning: Synthesizing Plans that Work for Multiple Environments.
\newblock In \emph{Proceedings of the 22nd International Joint Conference on Artificial Intelligence, {IJCAI}}, 918--923.

\bibitem[{Illanes and McIlraith(2019)}]{Illanes19QP}
Illanes, L.; and McIlraith, S.~A. 2019.
\newblock Generalized Planning via Abstraction: Arbitrary Numbers of Objects.
\newblock In \emph{Proceedings of the 33rd {AAAI} Conference on Artificial Intelligence}, 7610--7618.

\bibitem[{Kuske and Schweikardt(2017)}]{FOCN}
Kuske, D.; and Schweikardt, N. 2017.
\newblock First-order logic with counting.
\newblock In \emph{2017 32nd Annual ACM/IEEE Symposium on Logic in Computer Science (LICS)}, 1--12.

\bibitem[{Levesque(2005)}]{Levesque05}
Levesque, H.~J. 2005.
\newblock Planning with Loops.
\newblock In \emph{Proceedings of the 19th International Joint Conference on Artificial Intelligence, {IJCAI}}, 509--515.

\bibitem[{Levesque et~al.(1997)Levesque, Reiter, Lesp{\'{e}}rance, Lin, and Scherl}]{golog}
Levesque, H.~J.; Reiter, R.; Lesp{\'{e}}rance, Y.; Lin, F.; and Scherl, R.~B. 1997.
\newblock GOLOG: A logic programming language for dynamic domains.
\newblock \emph{The Journal of Logic Programming}, 31(1): 59--83.

\bibitem[{Reiter(2001)}]{sc2001}
Reiter, R. 2001.
\newblock Knowledge in Action: Logical Foundations for Specifying and Implementing Dynamical Systems.
\newblock \emph{MIT Press}.

\bibitem[{Riddle et~al.(2016)Riddle, Douglas, Barley, and Franco}]{Riddle16Baggy}
Riddle, P.; Douglas, J.; Barley, M.; and Franco, S. 2016.
\newblock Improving performance by reformulating PDDL into a bagged representation.
\newblock In \emph{Proceedings of the 8th Workshop on Heuristics and Search for Domain-independent Planning, {HSDIP}}, 28--36.

\bibitem[{Rodriguez et~al.(2021)Rodriguez, Bonet, Sardi{\~n}a, and Geffner}]{FONDASP}
Rodriguez, I. D.~J.; Bonet, B.; Sardi{\~n}a, S.; and Geffner, H. 2021.
\newblock Flexible FOND Planning with Explicit Fairness Assumptions.
\newblock \emph{J. Artif. Intell. Res.}, 74.

\bibitem[{Srivastava(2023)}]{termination23}
Srivastava, S. 2023.
\newblock Hierarchical Decompositions and Termination Analysis for Generalized Planning.
\newblock \emph{J. Artif. Intell. Res.}, 77: 1203--1236.

\bibitem[{Srivastava, Immerman, and Zilberstein(2008)}]{Srivastava08AbstractCounting}
Srivastava, S.; Immerman, N.; and Zilberstein, S. 2008.
\newblock Learning Generalized Plans Using Abstract Counting.
\newblock In \emph{Proceedings of the 23rd {AAAI} Conference on Artificial Intelligence}, 991--997.

\bibitem[{Srivastava et~al.(2011)Srivastava, Zilberstein, Immerman, and Geffner}]{Srivastava11QNP}
Srivastava, S.; Zilberstein, S.; Immerman, N.; and Geffner, H. 2011.
\newblock Qualitative Numeric Planning.
\newblock In \emph{Proceedings of the 25th {AAAI} Conference on Artificial Intelligence}.

\bibitem[{Zarrie{\ss} and Cla{\ss}en(2016)}]{Zarrie2016DecidableVO}
Zarrie{\ss}, B.; and Cla{\ss}en, J. 2016.
\newblock Decidable Verification of Golog Programs over Non-Local Effect Actions.
\newblock In \emph{AAAI Conference on Artificial Intelligence}.

\bibitem[{Zeng, Liang, and Liu(2022)}]{hemeng22}
Zeng, H.; Liang, Y.; and Liu, Y. 2022.
\newblock A Native Qualitative Numeric Planning Solver Based on AND/OR Graph Search.
\newblock In \emph{Proceedings of the Thirty-First International Joint Conference on Artificial Intelligence, {IJCAI-22}}, 4693--4700.

\end{thebibliography}

\end{document}